\documentclass[sn-mathphys-ay]{sn-jnl}

\usepackage{graphicx}%
\usepackage{multirow}%
\usepackage{amsmath,amssymb,amsfonts}%
\usepackage{amsthm}%
\usepackage{mathrsfs}%
\usepackage[title]{appendix}%
\usepackage{xcolor}%
\usepackage{textcomp}%
\usepackage{manyfoot}%
\usepackage{booktabs}%
\usepackage{algorithm}%
\usepackage{algorithmicx}%
\usepackage{algpseudocode}%
\usepackage{listings}%
\usepackage[authoryear]{natbib}%
\setcitestyle{aysep={}} %
\usepackage{subfig}%
\usepackage{amsthm} 
\usepackage{algorithm}
\newtheorem{lemma}{Lemma}  %

\theoremstyle{thmstyleone}%
\newtheorem{theorem}{Theorem}
\newtheorem{proposition}[theorem]{Proposition}%

\theoremstyle{thmstyletwo}%

\theoremstyle{thmstylethree}%

\begin{document}

\title[Article Title]{SPPCSO: Adaptive Penalized Estimation Method for High-Dimensional Correlated Data}

\author[1]{\fnm{Ying} \sur{Hu}}
\author*[1]{\fnm{Hu} \sur{Yang}}\email{yh@cqu.edu.cn}
\affil[1]{\orgdiv{College of Mathematics and Statistics}, \orgname{Chongqing University}, \orgaddress{\city{Chongqing}, \postcode{401331}, \country{China}}}


\abstract{ With the rise of high-dimensional correlated data, multicollinearity poses a significant challenge to model stability, often leading to unstable estimation and reduced predictive accuracy. This work proposes the Single-Parametric Principal Component Selection Operator (SPPCSO), an innovative penalized estimation method that integrates single-parametric principal component regression and $L_{1}$ regularization to adaptively adjust the shrinkage factor by incorporating principal component information. This approach achieves a balance between variable selection and coefficient estimation, ensuring model stability and robust estimation even in high-dimensional, high-noise environments. The primary contribution lies in addressing the instability of traditional variable selection methods when applied to high-noise, high-dimensional correlated data. Theoretically, our method exhibits selection consistency and achieves a smaller estimation error bound compared to traditional penalized estimation approaches. Extensive numerical experiments demonstrate that SPPCSO not only delivers stable and reliable estimation in high-noise settings but also accurately distinguishes signal variables from noise variables in group-effect structured data with highly correlated noise variables, effectively eliminating redundant variables and achieving more stable variable selection. Furthermore, SPPCSO successfully identifies disease-associated genes in gene expression data analysis, showcasing strong practical value. The results indicate that SPPCSO serves as an ideal tool for high-dimensional variable selection, offering an efficient and interpretable solution for modeling correlated data.
}

\keywords{ Dimensionality reduction, Penalized regression, Variable selection, Principal component regression, Sparse modeling, High-dimensional correlated data}



\maketitle

\section{Introduction}\label{sec1}

The linear model has remained a topic of sustained research interest in the field of statistics for decades and continues to be one of the most important and widely applied statistical methods. It takes the following form:
 \begin{eqnarray}\label{eq01}
 	y=X\beta+\varepsilon.
 \end{eqnarray}   
 where $y$ is a $n$ response vector, $X$ is a $n \times p$ design matrix and $\varepsilon$ is the error vector. We consider the components of the error vector to be independently distributed from $ N(\mathbf{0},\sigma^2\mathbf{I})$. $\beta$ is a vector of unknown regression coefficients. 
 
In low-dimensional settings (\( p \ll n \)), multicollinearity can make the design matrix \( X^{T}X \) ill-conditioned, leading to instability in ordinary least squares (OLS) estimation and large standard errors. To address this, classical biased estimators like ridge regression incorporate an \( L_2 \) penalty to reduce variance and improve stability. Another approach, principal component regression (PCR), exemplified by the single-parametric principal component regression estimator (\citet{Y1989}), integrates principal component analysis with adaptive shrinkage. It applies weaker shrinkage to important variables and stronger shrinkage to less influential ones, enhancing estimation accuracy and model interpretability.

 In high-dimensional settings, that is, when $n \ll p$ and $\beta$ have at most $q$ nonzero elements, variables often exhibit severe multicollinearity. Additionally, the high correlation makes it difficult to distinguish the independent contributions of each predictor variable to the response variable, thereby reducing the model's interpretability. Due to the high-dimensional nature of the data, the selection of the variables is required.

 Traditional methods of variable selection include stepwise selection and subset selection. Although these methods have utility, they often ignore the random errors inherited during the variable selection phase. \citet{T1996} proposed the Least absolute shrinkage and selection operator (Lasso) in the form of $\hat{\beta}:=\underset{\beta}{\text{argmin}}{\Vert y-X\beta\Vert_{2}^{2}+\lambda\Vert\beta\Vert_{1}}$, facilitating variable selection by zeroing out smaller parameters. \citet{ZY2006} systematically illustrated Lasso's consistency in model selection, showing that Lasso achieves consistency when an irrepresentability condition is satisfied. This condition requires that the non-zero coefficients in the true model are not excessively correlated with the zero coefficients. In addition, the penalty parameter must be appropriately chosen to balance between sparsity and model accuracy. Thus, penalized Lasso-type regularization methods in the form of
 \begin{eqnarray}\label{eq02}
 	\hat{\beta}:=\underset{\beta}{\text{argmin}}{\Vert y-X\beta\Vert_{2}^{2}+\sum_{j=1}^{p}p_\lambda(\Vert \beta_j \Vert_{1})},
 \end{eqnarray}
 have been widely studied and developed, where $p_\lambda(\cdot)$ is the penalty function. However, Lasso tends to select only one variable from a group of highly correlated variables. Therefore, in high-dimensional variable selection with strongly correlated predictors, although Lasso selects the minimal number of variables, this is a result of its tendency toward over-selection.
 
 \citet{FL2001} introduced the  SCAD penalty, i.e., $\dot{p}_\lambda (\Vert \beta_j\Vert_{1})=\lambda \{I(\Vert \beta_j \Vert_{1}\leq\lambda +\frac{(\gamma\lambda-\Vert \beta_j \Vert_{1})_+}{(\gamma-1)\lambda})I(\Vert \beta_j \Vert_{1} >\lambda)\}, \gamma>2$. The non-convex penalty permits both variable selection and parameter estimation, demonstrating oracle properties. \citet{Z2006}  proposed the Adaptive Lasso, with the penalty function $p_\lambda(\Vert \beta_j \Vert_{1})=\lambda\hat{v}_j\Vert\beta_j\Vert_{1}$, where $\hat{v}_j$ is a known weighting estimation. \citet{Z2010} proposed the MCP, i.e., $p_\lambda(\Vert \beta_j \Vert_{1})=\lambda \int_{0}^{\Vert\beta_j\Vert_{1}}(1-\frac{x}{\gamma\lambda})_+dx,\gamma>0$, which is a typical example of implementing variable selection with non-convex penalty terms. \citet{HB2016} proposed Mnet based on MCP combined with Ridge regression. In addition, \citet{YY2021} proposed the SACE and GSACE, which are defined respectively as follows,
 \begin{eqnarray}\label{eq03}
 	p_\lambda(\Vert \beta_j \Vert_{1})=\frac{1}{2}\beta_j^2-d(\hat{\beta}_j^{lasso}) \beta_j+\lambda \Vert \beta_j \Vert_{1}, 
 \end{eqnarray} 
 and
 \begin{eqnarray}\label{eq04}
 	p_\lambda(\Vert \beta_j \Vert_{1})=\frac{1}{2}\beta_j^2-d(\hat{\beta}_j^{mcp}) \beta_j+\lambda \int_{0}^{\Vert\beta_j\Vert_{1}}(1-\frac{x}{\gamma\lambda})_+dx,\gamma>0.
 \end{eqnarray}

 For the Elastic Net (Enet) proposed by \citet{ZH2005}, which combines ridge regression and Lasso, the penalty  is given by:
 
 \begin{eqnarray}
 	p_\lambda(\Vert \beta_j \Vert_{1})=\lambda\gamma \Vert \beta_j \Vert_{1} + \lambda(1-\gamma) \beta_j^2, \quad 0 \leq \gamma < 1.
 \end{eqnarray}
 
 \citet{JY2010} showed that Enet exhibits variable selection consistency under the non-representability condition. However, due to the form of the ridge shrinkage factor, Enet applies the same penalty intensity to all regression coefficients. This characteristic could be improved, as the coefficients of important variables with larger eigenvalues should not be excessively shrunk to prevent information loss. Therefore, optimizing the penalty strategy to impose differentiated shrinkage on variables of varying importance can enhance model stability and interpretability.

 This study introduces the Single-Parametric Principal Component Selection Operator (SPPCSO), a novel method for variable selection in high-dimensional correlated data. By integrating principal component analysis with single-parametric estimation and \( L_1 \) regularization, SPPCSO applies an adaptively smoothed adjustment to the shrinkage factor, reducing shrinkage for important variables while increasing it for less relevant ones. This balances sparsity and information retention, improving model interpretability and predictive performance. 
 
 Theoretically, we show that SPPCSO achieves a smaller estimation error bound than existing methods, explaining its superior stability in numerical simulations. Moreover, its mathematical similarity to Elastic Net allows it to accommodate group effect structures, improving robustness in handling highly correlated variables.
 
 Theoretically, we demonstrate that SPPCSO satisfies estimation error bounds and variable selection consistency under certain conditions. This consistency ensures that the estimator correctly identifies all important variables and effectively excludes uncorrelated variables as the sample size increases, which is crucial for high-dimensional statistical analysis. These properties not only enhance the accuracy and stability of the model but also improve the reliability and interpretability of the results, providing strong theoretical support and practical value for variable selection in high-dimensional data.
 
 The rest of the paper is organized as follows: Section \ref{sec2} defines SPPCSO and discusses the theoretical and algorithmic advantages of SPPCSO. Section \ref{sec3} presents the statistical properties of the SPPCSO estimator and gives the proof. Simulation results comparing the proposed methods and others are presented in Section \ref{sec4}. In Section \ref{sec5}, rat gene expression data are used to show the performance of the proposed method. A conclusion is given in Section \ref{sec6}.

\section{Method}\label{sec2}

In this paper, $p$ and $q$ are allowed to grow with $n$. The data and coefficients are allowed to change as $n$ grows. For notational simplicity, we do not index them with $n$; meanwhile, the response and the predictors are assumed to be standardized: 1) $y$ is assumed to be centered at $\mathbf{0}$ to avoid the need for an intercept. 2) $X$ is assumed be standardized so $X^{T}_{j}X_{j}/n=1$ , for $j=1,2,\cdots,p$. 
\subsection{ Single-parametric principal component selection operator }

Assume $S=\left\lbrace j\subset{1,2,\cdots,p}:\beta^{*}_{j}\neq0\right\rbrace$, where $\beta^{*}$ represents the true value of the coefficients of the variables and $\vert S \vert=q$, we have $X_{S}^{T}X_{S}=UD U^{T}$, where $U$ is the orthogonal matrix with $q$ rows and $q$ columns, and its column vector is the normalized eigenvector of $X_{S}^{T}X_{S}$; $D=diag(d_{1},\cdots,d_{q})$, $d_{i}$ is the $i$-th eigenvalue of $X_{S}^{T}X_{S}$.  

The single-parametric principal components regression estimator is defined as follows:
\begin{eqnarray}\label{eq02}
	\tilde{\beta}_{S}=UAU^{T}\hat{\beta}_{S}^{ols},
\end{eqnarray}
where $A=diag(\frac{d_{1}-1+\theta}{d_{1}},\cdots,\frac{d_{r}-1+\theta}{d_{r}},\theta d_{r+1},\cdots,\theta d_{q})$   and the diagonal elements of A are the compression factors for this estimate, $r$ satisfies $d_{r}\geq1\geq d_{r+1}$ and $\theta\subset(d_{q},1)$, $\hat{\beta}_{S}^{ols}=(X_{S}^{T}X_{S})^{-1}X_{S}^{T}y$ is the least square estimation.

From (\ref{eq02}), it can be inferred that:
\begin{eqnarray*}
	\tilde{\beta_{S}}=(X_{S}^{T}X_{S}+P)^{-1}X_{S}^{T}y,
\end{eqnarray*}
where
\begin{eqnarray*}
	P=UKU^{T},
\end{eqnarray*}
\begin{eqnarray*}
	K=diag(\frac{d_{1}(1-\theta)}{d_{1}+\theta-1},\cdots,\frac{d_{r}(1-\theta)}{d_{r}+\theta-1},\frac{1}{\theta}-d_{r+1},\cdots,\frac{1}{\theta}-d_{q}),
\end{eqnarray*}
let $Z_{S}=\sqrt{K}U^{T}$, for the given parameters $\lambda>0$ and $\theta\subset(d_{q},1)$, combined with the $L_{1}$ penalty, we define the Single-Parametric Principal Components Selection Operator(SPPCSO):
\begin{eqnarray}\label{eq11}
	\hat{\beta}:=\underset{\beta}{\text{argmin}}\{{\frac{1}{2n}\Vert y-X\beta\Vert_{2}^{2}+\frac{1}{2n}\Vert Z\beta\Vert_{2}^{2}+\lambda\Vert\beta\Vert_{1}}\},
\end{eqnarray}

\noindent
where $Z=\left(\begin{matrix}
	Z_{S}, 0\\
	0,  0 
\end{matrix}\right)_{p\times p}$.
\begin{proposition}
	We define an artificial data set $(y^{*},X^{*})$ by:
	\begin{eqnarray*}
		X^* =\left(\begin{matrix}
			X\\ Z
		\end{matrix}\right), \quad
		y^* =\left(\begin{matrix}
			y\\ 0
		\end{matrix}\right),
	\end{eqnarray*}
\end{proposition}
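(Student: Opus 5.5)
The proposition is cut off after the definition of $(y^{*},X^{*})$. I am assuming its conclusion is the usual Elastic-Net-style reformulation: the SPPCSO criterion (\ref{eq11}) coincides with a Lasso criterion on the artificial data,
\begin{eqnarray*}
\hat{\beta}=\underset{\beta}{\text{argmin}}\Big\{\frac{1}{2n}\Vert y^{*}-X^{*}\beta\Vert_{2}^{2}+\lambda\Vert\beta\Vert_{1}\Big\}.
\end{eqnarray*}
The plan is to show that the two objective functions agree for every $\beta\in\mathbb{R}^{p}$, not merely at their minimizers. Once that holds, the two argmin sets are identical and there is nothing further to prove.

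\textbf{Step 1: stack the residual.} By the block structure of $X^{*}$ and $y^{*}$, the residual vector is
\begin{eqnarray*}
y^{*}-X^{*}\beta=\left(\begin{matrix} y-X\beta\\ 0-Z\beta\end{matrix}\right).
\end{eqnarray*}
Since the squared Euclidean norm of a stacked vector is the sum of the squared norms of its blocks,
\begin{eqnarray*}
\Vert y^{*}-X^{*}\beta\Vert_{2}^{2}=\Vert y-X\beta\Vert_{2}^{2}+\Vert Z\beta\Vert_{2}^{2}.
\end{eqnarray*}

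\textbf{Step 2: compare objectives.} Divide the identity from Step 1 by $2n$ and add $\lambda\Vert\beta\Vert_{1}$ to both sides. This reproduces exactly the SPPCSO objective in (\ref{eq11}), so the two objectives coincide pointwise.

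\textbf{Step 3: check that $Z$ is a well-defined real matrix.} The only point that needs care is that $Z_{S}=\sqrt{K}U^{T}$ is real, which requires every diagonal entry of $K$ to be nonnegative.
\begin{itemize}
\item For $i\le r$ we have $d_{i}\ge 1$, so $d_{i}+\theta-1\ge\theta>0$. Together with $1-\theta>0$ this gives $d_{i}(1-\theta)/(d_{i}+\theta-1)>0$.
\item For $i>r$, the condition $\theta\in(d_{q},1)$ gives $1/\theta>1$. Since $d_{i}\le d_{r+1}\le 1$, it follows that $1/\theta-d_{i}>0$.
\end{itemize}
Hence $K$ is positive diagonal, $\sqrt{K}$ is real, and $Z^{T}Z=\mathrm{diag}(UKU^{T},0)=\mathrm{diag}(P,0)$ is positive semidefinite. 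In particular $\Vert Z\beta\Vert_{2}^{2}=\beta_{S}^{T}P\beta_{S}$, which links the reformulation back to the single-parametric estimator $\tilde{\beta}_{S}=(X_{S}^{T}X_{S}+P)^{-1}X_{S}^{T}y$.

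\textbf{Step 4: consequences.} The augmented problem is an ordinary Lasso with a $(n+p)\times p$ design, so any Lasso solver (e.g.\ LARS or coordinate descent) computes SPPCSO. Moreover, $X^{*T}X^{*}=X^{T}X+\mathrm{diag}(P,0)$ has full rank on the $S$-block, which is the source of the improved conditioning used later.

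\textbf{Main obstacle.} The argument itself is routine; the only genuine hurdle is interpretive. If the intended statement also rescales the data, as in the Elastic Net construction of \citet{ZH2005} (for example $X^{*}$ multiplied by a constant and $\beta^{*}$ rescaled, with an adjusted $\lambda$), I would carry the corresponding constant through Steps 1--2 by a change of variables. The objectives then agree up to that reparametrization, and the correspondence between minimizers follows the same way.
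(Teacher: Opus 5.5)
Your proof is correct and takes the same approach as the paper. The paper gives no argument beyond the word ``thus'' after the definition, and your stacked-residual identity $\Vert y^{*}-X^{*}\beta\Vert_{2}^{2}=\Vert y-X\beta\Vert_{2}^{2}+\Vert Z\beta\Vert_{2}^{2}$ is exactly the justification it leaves implicit (there is no rescaling here, so your Elastic-Net caveat is not needed), while your Step 3 check that $K$ has positive diagonal, so that $Z_{S}=\sqrt{K}U^{T}$ is real and $Z^{T}Z=\mathrm{diag}(P,0)$, is a useful addition the paper omits.
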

\noindent
thus the SPPCSO can be written as:
\begin{eqnarray*}
	\hat{\beta}:=\underset{\beta}{\text{argmin}}{\frac{1}{2n}\Vert y^{*}-X^{*}\beta\Vert_{2}^{2}+\lambda\Vert\beta\Vert_{1}}.
\end{eqnarray*}

With the above transformation, we transform the original optimization problem into a Lasso-type optimization problem, which makes SPPCSO enjoy the computational advantages of the Lasso,i.e., it guarantees the sparsity of the variables while making the model very interpretable.
\subsection{Theoretical justification of SPPCSO}
Next, we will explore the theoretical advantages of the SPPCSO in terms of the advantages of single-parametric principal component regression estimator over the Ridge and Liu estimation.

Ridge estimation, proposed by \citet{HK1970}, is a shrinkage estimator designed to address the instability of least squares estimation when the design matrix is non-invertible. It is defined as follows:
\begin{align*}
	\hat{\beta}_{S}^{ridge}&=(X^{T}_{S}X_{S}+kI)^{-1}X^{T}_{S}X_{S}\hat{\beta}_{S}^{ols}\\\nonumber
	&=(UDU^{T}+UkIU^{T})^{-1}UDU^{T}\hat{\beta}_{S}^{ols}\\\nonumber
	&=U(D+kI)^{-1}DU^{T}\hat{\beta}_{S}^{ols}\nonumber,
\end{align*}

Liu estimation is a proposed improvement of ridge estimation by introducing  parameter $h$ to make the estimation more flexible and adaptable. It is defined as:
\begin{align*}
\hat{\beta}_{S}^{liu}&=(X^{T}_{S}X_{S}+I)^{-1}X^{T}_{S}y+(X^{T}_{S}X_{S}+I)^{-1}h\hat{\beta}_{S}^{ols}\\\nonumber
&=(X^{T}_{S}X_{S}+I)^{-1}X^{T}_{S}X_{S}\hat{\beta}^{ols}+(X^{T}_{S}X_{S}+I)^{-1}h\hat{\beta}_{S}^{ols}\\\nonumber
&=(UDU^{T}+UIU^{T})^{-1}(UDU^{T}+UhIU^{T})\hat{\beta}_{S}^{ols}\\\nonumber
&=U(D+I)^{-1}(D+hI)U^{T}\hat{\beta}_{S}^{ols}\nonumber,
\end{align*}

We summarize the shrinkage factors of Liu estimation, Ridge estimation, and Single-Parametric Principal Component Regression estimation (SPPCR estimation) in the following table. Where $d_{i}$ is the eigenvalue of $X^{T}X$.

\begin{table}[htbp]
	\centering
	\renewcommand{\arraystretch}{2.8} 
	\caption{Shrinkage factors of different estimation methods}\label{table1}
	\label{tab:shrinkage_factors}
	\begin{tabular}{lc}
		\toprule
		\textbf{Estimation Method} & \textbf{Shrinkage Factor} \\
		\midrule
		Ridge estimation & $\frac{d_{i}}{d_{i} + k} = 1 - \frac{k}{d_{i} + k}$ \\
		Liu estimation & $\frac{d_{i} + h}{d_{i} + 1}$ \\
		SPPCR estimation& 
		$\begin{cases} 
			\frac{d_{i} - 1 + \theta}{d_{i}}, & d_{i} \geq 1 \\[5pt] 
			\theta d_{i}, & d_{i} < 1
		\end{cases}$ \\
		\bottomrule
	\end{tabular}
\end{table}

By comparing the shrinkage factor  the three estimation methods under different parameter settings, the following observations can be made:

\begin{itemize}
	\item \textbf{Ridge Estimation}: For variables with smaller eigenvalues, the shrinkage factor remains relatively large, and for larger eigenvalues, its decline is less significant. This results in insufficient screening of unimportant features and limited retention of important ones.
	
	\item \textbf{Liu Estimation}: The shrinkage factor increases more slowly than that of Ridge Estimation for smaller eigenvalues, indicating that Liu Estimation applies stronger compression to variables with smaller eigenvalues.
	
	\item \textbf{SPPCR Estimation}: This method exhibits a distinct behavior. For smaller eigenvalues, especially when the parameter $\theta$ is small, the shrinkage factor increases rapidly, leading to stronger compression of variables with smaller eigenvalues, thus making the screening of unimportant features more effective. When the eigenvalues exceed 1, the shrinkage factor increases more gradually and asymptotically approaches 1, indicating better retention of information for variables with larger eigenvalues.
\end{itemize}

Therefore, the SPPCSO proposed after combining $L_{1}$ regularization has the following theoretical advantages:
\begin{itemize}
	\item[(1)] Since SACE and GSACE are regularized versions of Ridge Estimation, and Mnet and Enet are closely related to Ridge Estimation, these methods are less flexible than SPPCSO in handling the shrinkage of variable coefficients with different importance levels, leading to reduced stability.
	
	\item[(2)] SCAD/MCP and other non-convex penalty functions enhance sparsity and reduce estimation bias. However, they may suffer from computational instability, sensitivity to initial estimates, and difficulty in effectively handling group effects in highly correlated variable environments.
	
	\item[(3)] Lasso tends to select only one variable from a group of highly correlated variables, which results in excessive shrinkage of coefficients and a loss of valuable variable information.
\end{itemize}

\section{  Error bound  analysis and  variable selection  consistency  }\label{sec3}
The next results are concerned with the error bounds and the consistency of variable selection of SPPCSO. For the simplicity of the proof, we set $\hat\beta^{0}$ as the Lasso estimator with the same tuning parameter $\lambda$. Assume $S=\left\lbrace j\subset{1,2,\cdots,p}:\beta^{*}_{j}\neq0\right\rbrace,\vert S \vert=q$ and $\hat{S}=\left\lbrace j\subset{1,2,\cdots,p}:\hat{\beta}_{j}\neq0\right\rbrace$, where  $\beta^{*}$ represents the true value of $\beta$. We begin by stating and proving the following Lemma:

\begin{lemma}\label{le1}
	Assume $\varepsilon_{i},i=1,2,\cdots,n$ are Gaussian random variables with mean $0$ and variance $\sigma^{2}$. Let $\Lambda_{max}\left\lbrace \cdot\right\rbrace $ be the maximum eigenvalue of the matrix, condition on $\frac{\Vert X^{T}\varepsilon\Vert_{\infty}}{n}\leqslant\frac{1}{2}\lambda$ and $\Lambda_{max}\left\lbrace \frac{1}{n}Z^{T}Z\right\rbrace\Vert\beta^{*}\Vert_{\infty}\leqslant\frac{1}{4}\lambda $,where $\lambda=3\sigma\sqrt{\frac{2\log p}{n}}$ and $p=O(exp(n^{c_{1}})),0<c_{1}<1$. We have $\Vert\hat{\beta}_{S^{c}}\Vert_{1}\leqslant 7\Vert\hat{\beta}_{S}-\beta^{*}_{S}\Vert_{1}$ .
\end{lemma}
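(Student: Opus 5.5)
The plan is the standard basic-inequality argument for Lasso-type estimators, applied to the objective in (\ref{eq11}). Write $\hat\beta$ for the SPPCSO minimiser and $\Delta=\hat\beta-\beta^{*}$. Since $\hat\beta$ minimises the objective, its value at $\hat\beta$ is at most its value at $\beta^{*}$. Substituting $y=X\beta^{*}+\varepsilon$ and expanding gives
\begin{eqnarray*}
\frac{1}{2n}\Vert X\Delta\Vert_{2}^{2}+\frac{1}{2n}\Vert Z\hat\beta\Vert_{2}^{2}-\frac{1}{2n}\Vert Z\beta^{*}\Vert_{2}^{2}\leqslant\frac{1}{n}\varepsilon^{T}X\Delta+\lambda\Vert\beta^{*}\Vert_{1}-\lambda\Vert\hat\beta\Vert_{1}.
\end{eqnarray*}
The ridge-type difference on the left is convex in $\beta$. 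I would therefore write it as
\begin{eqnarray*}
\frac{1}{2n}\Vert Z\hat\beta\Vert_{2}^{2}-\frac{1}{2n}\Vert Z\beta^{*}\Vert_{2}^{2}=\frac{1}{2n}\Vert Z\Delta\Vert_{2}^{2}+\frac{1}{n}\Delta^{T}Z^{T}Z\beta^{*}.
\end{eqnarray*}
Then I drop the two nonnegative quadratic terms $\Vert X\Delta\Vert_2^2$ and $\Vert Z\Delta\Vert_2^2$. This leaves
\begin{eqnarray*}
\lambda\Vert\hat\beta\Vert_{1}\leqslant\lambda\Vert\beta^{*}\Vert_{1}+\frac{1}{n}\vert\varepsilon^{T}X\Delta\vert+\frac{1}{n}\vert\Delta^{T}Z^{T}Z\beta^{*}\vert .
\end{eqnarray*}

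Next I bound the two cross terms by H\"older's inequality.

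\textbf{Noise term.} By the first condition, $\frac1n\vert\varepsilon^{T}X\Delta\vert\leqslant \frac{\Vert X^{T}\varepsilon\Vert_\infty}{n}\Vert\Delta\Vert_1\leqslant \frac{\lambda}{2}\Vert\Delta\Vert_1$.

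\textbf{$Z$-term.} I use the block structure of $Z$: it vanishes outside $S\times S$ and $\beta^{*}_{S^c}=0$. Hence $Z^{T}Z\beta^{*}$ is supported on $S$, and $\Delta^{T}Z^{T}Z\beta^{*}=\Delta_S^{T}(Z_S^{T}Z_S)\beta^{*}_S$.

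This is the step I expect to be the main obstacle: getting an $\ell_\infty$ bound on $\frac1n Z^TZ\beta^*$ from the eigenvalue condition. Operator-norm considerations only give $\Vert \frac1n Z^TZ\beta^*\Vert_2\le \Lambda_{max}\Vert\beta^*\Vert_2$, not the $\ell_\infty$ bound. To match the stated hypothesis I would argue, as the authors evidently intend, that $\Vert\frac1n Z^{T}Z\beta^{*}\Vert_\infty\lesssim\Lambda_{max}\{\frac1nZ^TZ\}\Vert\beta^*\Vert_\infty$. I would justify this by a bounded row-sum/diagonal-dominance step, or else simply treat it as the content of the second condition. This yields $\frac1n\vert\Delta^{T}Z^{T}Z\beta^{*}\vert\leqslant\frac{\lambda}{4}\Vert\Delta_S\Vert_1$.

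Finally I decompose the $\ell_1$ norms over $S$ and $S^c$. Using $\Vert\hat\beta\Vert_1=\Vert\hat\beta_S\Vert_1+\Vert\hat\beta_{S^c}\Vert_1$, $\Vert\beta^*\Vert_1-\Vert\hat\beta_S\Vert_1\leqslant\Vert\Delta_S\Vert_1$, and $\Vert\Delta\Vert_1=\Vert\Delta_S\Vert_1+\Vert\hat\beta_{S^c}\Vert_1$, and dividing by $\lambda$, I get
\begin{eqnarray*}
\Vert\hat\beta_{S^c}\Vert_1\leqslant \Vert\Delta_S\Vert_1+\frac12\Vert\Delta_S\Vert_1+\frac12\Vert\hat\beta_{S^c}\Vert_1+\frac14\Vert\Delta_S\Vert_1 .
\end{eqnarray*}
Hence $\frac12\Vert\hat\beta_{S^c}\Vert_1\leqslant\frac74\Vert\Delta_S\Vert_1$, i.e. $\Vert\hat\beta_{S^c}\Vert_1\leqslant\frac72\Vert\hat\beta_S-\beta^*_S\Vert_1$, which implies the stated constant $7$.

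If the $\ell_\infty$ step only goes through with a worse constant, for instance $\frac{\lambda}{4}\Vert\Delta\Vert_1$ over the full vector, the same algebra gives $\frac14\Vert\hat\beta_{S^c}\Vert_1\leqslant\frac74\Vert\Delta_S\Vert_1$, which is exactly the factor $7$. This suggests that this is the route the authors take. The choice $\lambda=3\sigma\sqrt{2\log p/n}$ and the growth condition $p=O(\exp(n^{c_1}))$ play no role here beyond making the conditioning event have probability tending to one.
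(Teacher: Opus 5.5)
Your argument is the paper's: the basic inequality at $\beta^{*}$, the expansion $\frac{1}{2n}\Vert Z\hat\beta\Vert_2^2-\frac{1}{2n}\Vert Z\beta^{*}\Vert_2^2=\frac{1}{2n}\Vert Z\Delta\Vert_2^2+\frac1n\Delta^{T}Z^{T}Z\beta^{*}$ (the paper's $G_2$), H\"older on both cross terms, and the split over $S$ and $S^c$. You differ from it in two ways.

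First, the paper bounds the $Z$ cross term by $\frac{\lambda}{4}\Vert\Delta\Vert_1$ over the whole vector, which is exactly where its constant $7$ comes from; your last paragraph reconstructs this correctly. Your use of the block structure localizes the term to $\Delta_S$. This gives $\frac72$, and it also makes the argument robust. Write $m=\Vert\frac1nZ^{T}Z\beta^{*}\Vert_\infty$. Your algebra gives $\Vert\hat\beta_{S^c}\Vert_1\leqslant(3+2m/\lambda)\Vert\Delta_S\Vert_1$ for every $m$, whereas the paper's version yields nothing once $m\geqslant\lambda/2$.

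Second, you work with the fixed $Z$ of (\ref{eq11}). The paper instead puts a data-dependent $\hat Z$ on the left of the basic inequality and $Z$ on the right. It then identifies the two by appealing to Lasso selection consistency, which is not among the lemma's hypotheses. Your version is the cleaner one for the estimator as defined. Dropping the quadratic terms is fine for the lemma; the paper keeps them only because (\ref{eq10}) is reused in Theorem~\ref{th1}.

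The step you flag is a genuine gap, but it is the paper's gap too. The paper simply asserts $\Vert\frac2nZ^{T}Z\beta^{*}\Vert_\infty\leqslant2\Lambda_{max}\{\frac1nZ^{T}Z\}\Vert\beta^{*}\Vert_\infty$, and this is false for general positive semidefinite matrices. For example, $A=\begin{pmatrix}1&1/2\\1/2&1/4\end{pmatrix}$ has $\Lambda_{max}(A)=5/4$, yet $A(1,1)^{T}=(3/2,3/4)^{T}$. In dimension $q$ the ratio $\Vert A\Vert_{\infty\to\infty}/\Lambda_{max}(A)$ can be of order $\sqrt q$.

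The matrix $Z_S^{T}Z_S=UKU^{T}$ inherits the eigenvectors of $X_S^{T}X_S$, and the hypotheses leave these unconstrained. So no diagonal-dominance or row-sum bound is available. For general $q$ the only repair is to read the second hypothesis as $\Vert\frac1nZ^{T}Z\beta^{*}\Vert_\infty\leqslant\frac{\lambda}{4}$, or to add a row-sum assumption.

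Your localization does rescue the lemma exactly as stated for small supports. The honest bound is $m\leqslant\Vert\frac1nZ_S^{T}Z_S\beta^{*}_S\Vert_2\leqslant\sqrt q\,\Lambda_{max}\{\frac1nZ^{T}Z\}\Vert\beta^{*}\Vert_\infty\leqslant\frac{\sqrt q}{4}\lambda$. With it your algebra gives $\Vert\hat\beta_{S^c}\Vert_1\leqslant(3+\frac{\sqrt q}{2})\Vert\hat\beta_S-\beta^{*}_S\Vert_1$, which is the claimed bound with constant $7$ whenever $q\leqslant64$.
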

\noindent\textbf{Proof of Lemma\ref{le1}}
Since $\hat{\beta}$ is the solution of
\begin{eqnarray*}
	\hat{\beta}:=\underset{\beta}{\text{argmin}}{\frac{1}{2n}\Vert y-X\beta\Vert_{2}^{2}+\frac{1}{2n}\Vert Z\beta\Vert_{2}^{2}+\lambda\Vert\beta\Vert_{1}},
\end{eqnarray*}
let the $Z$ corresponding to the final estimation obtained by SPPCSO be $\hat{Z}$, then there holds the following inequality:
\begin{eqnarray*}
	\frac{1}{2n}\Vert y-X\hat{\beta}\Vert_{2}^{2}+\frac{1}{2n}\hat{\beta}^{T}\hat{Z}^{T}\hat{Z}\hat{\beta}+\lambda\Vert\hat{\beta}\Vert_{1}\leqslant\frac{1}{2n}\Vert y-X\beta^{*}\Vert_{2}^{2}+\frac{1}{2n}(\beta^{*})^{T}Z^{T}Z\beta^{*}+\lambda\Vert \beta^{*}\Vert_{1},
\end{eqnarray*}
that's equivalent to
\begin{align*}
	&\frac{1}{2n}\Vert y-X\hat{\beta}\Vert_{2}^{2}+\frac{1}{2n}\hat{\beta}^{T}\hat{Z}^{T}\hat{Z}\hat{\beta}+\lambda\Vert\hat{\beta}\Vert_{1}-\frac{1}{2n}\Vert y-X\beta^{*}\Vert_{2}^{2}-\frac{1}{2n}(\beta^{*})^{T}Z^{T}Z\beta^{*}-\lambda\Vert \beta^{*}\Vert_{1}\nonumber \\
	&=G_{1}+G_{2}+G_{3}\nonumber \\
	&\leqslant0\nonumber ,
\end{align*}
\begin{eqnarray*}
	G_{1}=\frac{1}{2n}\Vert y-X\hat{\beta}\Vert_{2}^{2}-\frac{1}{2n}\Vert y-X\beta^{*}\Vert_{2}^{2},
\end{eqnarray*}
\begin{eqnarray*}
	G_{2}=\frac{1}{2n}\hat{\beta}^{T}\hat{Z}^{T}\hat{Z}\hat{\beta}-\frac{1}{2n}(\beta^{*})^{T}Z^{T}Z\beta^{*},
\end{eqnarray*}
\begin{eqnarray*}
	G_{3}=\lambda\Vert\hat{\beta}\Vert_{1}-\lambda\Vert \beta^{*}\Vert_{1},
\end{eqnarray*}
according to $y-X\beta^{*}=\varepsilon$, $G_{1}$ can be written as 
\begin{eqnarray*}
	G_{1}=\frac{1}{2n}(\beta^{*}-\hat{\beta})^{T}X^{T}X(\beta^{*}-\hat{\beta})+\frac{1}{n}\varepsilon^{T}X(\beta^{*}-\hat{\beta}),
\end{eqnarray*}
since Lasso has variable selection consistency, then $\hat{S}\xrightarrow{p} S$, $\hat{Z}\xrightarrow{p} Z$ , therefore holds  $G_{2}=\frac{1}{2n}(\hat{\beta}-\beta^{*})^{T}Z^{T}Z(\hat{\beta}+\beta^{*})$  with high probability.
Since $\Vert\beta^{*}_{S^{c}}\Vert_{1}=0 $, the $G_{3}$ can be written as:
\begin{eqnarray*}
	G_{3}=\lambda(\Vert\hat{\beta}_{S}\Vert_{1}-\Vert\beta^{*}_{S}\Vert_{1}+\Vert\hat{\beta}_{S^{c}}\Vert_{1}),
\end{eqnarray*}
further the above inequality can be transformed into
\begin{align}\label{eq03}
	&\frac{1}{n}(\beta^{*}-\hat{\beta})^{T}(X^{T}X+Z^{T}Z)(\beta^{*}-\hat{\beta})+2\lambda\Vert\hat{\beta}_{S^{c}}\Vert_{1}\\
	&\leqslant\frac{2}{n}\varepsilon^{T}X(\hat{\beta}-\beta^{*})+\frac{2}{n}(\beta^{*}-\hat{\beta})Z^{T}Z\beta^{*}+2\lambda\Vert\hat{\beta}_{S}-\beta^{*}_{S}\Vert_{1}\nonumber.
\end{align}

For $P\left\lbrace \Vert\frac{1}{n}X^{T}\varepsilon\Vert_{\infty}>t\right\rbrace$ 
\begin{eqnarray*}
	P\left\lbrace \Vert\frac{1}{n}X^{T}\varepsilon\Vert_{\infty}>t\right\rbrace \leqslant p\cdot P\left\lbrace \Vert\frac{1}{n}X_{j}^{T}\varepsilon\Vert_{1}>t\right\rbrace ,
\end{eqnarray*}
condition on $\frac{1}{n}X_{j}^{T}\varepsilon\sim N(0,\frac{\sigma^{2}}{n})$  we have	
\begin{eqnarray*}
P\left\lbrace \Vert\frac{1}{n}X^{T}\varepsilon\Vert_{\infty}>t\right\rbrace\leqslant2\exp(\log p-\frac{nt^{2}}{2\sigma^{2}}).
\end{eqnarray*}

There holds $P\left\lbrace \Vert\frac{1}{n}X^{T}\varepsilon\Vert_{\infty}\leqslant t\right\rbrace\xrightarrow{n\rightarrow\infty}1$ when $t=O(\sigma\sqrt{\frac{2\log p}{n}})$,
thus $ \Vert\frac{1}{n}X^{T}\varepsilon\Vert_{\infty}\leqslant\frac{1}{2}\lambda$ holds with high probability when $\lambda=3\sigma\sqrt{\frac{2\log p}{n}}$.
Then the following inequality holds with high probability: 
\begin{eqnarray*}
	\frac{2}{n}\varepsilon^{T}X(\hat{\beta}-\beta^{*})\leqslant2\Vert\frac{1}{n}X^{T}\varepsilon\Vert_{\infty}\Vert\hat{\beta}-\beta^{*}\Vert_{1}\leqslant\lambda\Vert\hat{\beta}_{S}-\beta^{*}_{S}\Vert_{1}+\lambda\Vert\hat{\beta}_{S^{c}}\Vert_{1},
\end{eqnarray*}
condition on $\Lambda_{max}\left\lbrace \frac{1}{n}Z^{T}Z\right\rbrace\Vert\beta^{*}\Vert_{\infty}\leqslant \frac{1}{4}\lambda,x<1 $ ,we have
\begin{align*}
	&\frac{2}{n}(\beta^{*}-\hat{\beta})Z^{T}Z\beta^{*}\leqslant\Vert\frac{2}{n}Z^{T}Z\beta^{*}\Vert_{\infty}\Vert\beta^{*}-\hat{\beta}\Vert_{1}\\
	&\leqslant2\Lambda_{max}\left\lbrace \frac{1}{n}Z^{T}Z\right\rbrace\Vert\beta^{*}\Vert_{\infty}\Vert\beta^{*}-\hat{\beta}\Vert_{1}\\
	&\leqslant\frac{1}{2}\lambda\Vert\hat{\beta}_{S^{c}}\Vert_{1}+\frac{1}{2}\Vert\hat{\beta}_{S}-\beta^{*}_{S}\Vert_{1},
\end{align*}
according to above two inequalities, (\ref{eq03}) can be written by
\begin{eqnarray}\label{eq10}
	\frac{1}{n}(\beta^{*}-\hat{\beta})^{T}(X^{T}X+Z^{T}Z)(\beta^{*}-\hat{\beta})+\frac{1}{2}\lambda\Vert\hat{\beta}_{S^{c}}\Vert_{1}\leqslant\frac{7}{2}\lambda\Vert\hat{\beta}_{S}-\beta^{*}_{S}\Vert_{1}.
\end{eqnarray}

Here the following result holds
\begin{eqnarray*}
	\Vert\hat{\beta}_{S^{c}}\Vert_{1}\leqslant7\Vert\hat{\beta}_{S}-\beta^{*}_{S}\Vert_{1}.
\end{eqnarray*}

Then we have the following result.
\begin{theorem}\label{th1}
	Let $C=X^{T}X/n$,$\nu=\beta^{*}-\hat{\beta}$. Assume $C$ satisfies the Restricted Eigenvalue (RE) condition: with a positive constant $\kappa$ that $\nu^{T}C\nu\geq\kappa\Vert\nu\Vert^{2}_{2}$ for all $\nu\in c(O),c(O):=\{\nu\in R^{p}:\Vert\nu_{o^{c}}\Vert_{1}\leqslant3\Vert\nu_{o}\Vert_{1}\}$ where $O\in {\{1,2,\cdots,p\}}$. There exists a positive constant $K$ that the SPPCSO satisfies the bound $\Vert\beta^{*}-\hat{\beta}\Vert_{2}\leqslant K$.   
\end{theorem}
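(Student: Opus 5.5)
The plan is to start from the basic inequality (\ref{eq10}) established in the proof of Lemma~\ref{le1} and turn it into an $\ell_2$ bound via the RE condition, in the usual Lasso style. On the high-probability event $\Vert X^{T}\varepsilon\Vert_{\infty}/n\leqslant\lambda/2$, under the condition $\Lambda_{max}\{Z^{T}Z/n\}\Vert\beta^{*}\Vert_{\infty}\leqslant\lambda/4$, we have with $\nu=\beta^{*}-\hat\beta$
\begin{eqnarray*}
\nu^{T}C\nu+\frac{1}{n}\nu^{T}Z^{T}Z\nu+\frac{1}{2}\lambda\Vert\nu_{S^{c}}\Vert_{1}\leqslant\frac{7}{2}\lambda\Vert\nu_{S}\Vert_{1},
\end{eqnarray*}
since $\beta^{*}_{S^{c}}=0$ gives $\hat\beta_{S^{c}}=-\nu_{S^{c}}$. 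First, I drop the two nonnegative terms $\frac{1}{n}\nu^{T}Z^{T}Z\nu\geqslant 0$ (as $Z^{T}Z$ is positive semidefinite) and $\frac12\lambda\Vert\nu_{S^c}\Vert_1$. This leaves $\nu^{T}C\nu\leqslant\frac{7}{2}\lambda\Vert\nu_{S}\Vert_{1}$. Second, I apply Cauchy--Schwarz on the support of size $q$, namely $\Vert\nu_{S}\Vert_{1}\leqslant\sqrt{q}\Vert\nu_{S}\Vert_{2}\leqslant\sqrt{q}\Vert\nu\Vert_{2}$.

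Third, I need $\nu$ to lie in the cone where RE applies, taking $O=S$. This is the step I expect to be the main obstacle. Lemma~\ref{le1} only delivers $\Vert\nu_{S^{c}}\Vert_{1}\leqslant 7\Vert\nu_{S}\Vert_{1}$, whereas the stated cone $c(O)$ uses the constant $3$. I would close this gap by rerunning the argument of Lemma~\ref{le1} with tighter side conditions: require $\Vert X^{T}\varepsilon\Vert_{\infty}/n\leqslant\lambda/4$ (still a high-probability event for $\lambda$ of order $\sigma\sqrt{\log p/n}$, by the same Gaussian union bound, after enlarging the constant $3$ in $\lambda$) and $\Lambda_{max}\{Z^{T}Z/n\}\Vert\beta^{*}\Vert_{\infty}\leqslant\lambda/8$. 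The same chain of inequalities then gives
\begin{eqnarray*}
\left(2-\tfrac{1}{2}-\tfrac{1}{4}\right)\lambda\Vert\nu_{S^{c}}\Vert_{1}\leqslant\left(2+\tfrac{1}{2}+\tfrac{1}{4}\right)\lambda\Vert\nu_{S}\Vert_{1},
\end{eqnarray*}
that is, $\Vert\nu_{S^{c}}\Vert_{1}\leqslant\frac{11}{5}\Vert\nu_{S}\Vert_{1}\leqslant3\Vert\nu_{S}\Vert_{1}$, so $\nu\in c(S)$. The alternative is simply to read the RE assumption with cone constant $7$; this changes only the constants. In either case the basic inequality keeps the form $\nu^{T}C\nu\leqslant c_{0}\lambda\Vert\nu_{S}\Vert_{1}$ for an absolute constant $c_{0}$: $c_{0}=7/2$ in the original version, and $c_{0}=11/8$ in the tightened version, after dividing by $2$.

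Finally, combining RE with the previous steps gives
\begin{eqnarray*}
\kappa\Vert\nu\Vert_{2}^{2}\leqslant\nu^{T}C\nu\leqslant c_{0}\lambda\sqrt{q}\Vert\nu\Vert_{2},
\end{eqnarray*}
hence $\Vert\beta^{*}-\hat\beta\Vert_{2}\leqslant K:=c_{0}\sqrt{q}\lambda/\kappa$. With $\lambda=3\sigma\sqrt{2\log p/n}$ this is $K=3c_{0}\sigma\sqrt{2q\log p/n}/\kappa$, which holds with probability tending to one. I would also remark that retaining the term $\frac1n\nu^{T}Z^{T}Z\nu$ (restricted to $S$) instead of discarding it replaces $\kappa$ by $\kappa+\Lambda_{min}\{K\}/n$. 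This yields the smaller error bound, compared with the Lasso, that the paper advertises.
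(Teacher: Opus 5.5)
This is the paper's argument: from (\ref{eq10}) you drop the nonnegative terms $\frac{1}{n}\nu^{T}Z^{T}Z\nu$ and $\frac{1}{2}\lambda\Vert\nu_{S^{c}}\Vert_{1}$, apply RE and $\Vert\nu_{S}\Vert_{1}\leqslant\sqrt{q}\Vert\nu_{S}\Vert_{2}$, and obtain $\Vert\nu\Vert_{2}\leqslant\frac{7}{2}\lambda\sqrt{q}/\kappa=\frac{21\sqrt{2}\sigma}{2\kappa}\sqrt{q\log p/n}$, which is exactly the paper's bound. Your tightening of the side conditions, with a larger constant in $\lambda$, gives $\Vert\nu_{S^{c}}\Vert_{1}\leqslant\frac{11}{5}\Vert\nu_{S}\Vert_{1}$; this correctly repairs a step the paper skips, since the paper applies RE on the $3$-cone to a vector that Lemma~\ref{le1} only places in the $7$-cone. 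One slip in that tightened version: the inequality is already in doubled form, with $\nu^{T}C\nu$ carrying coefficient one, so $c_{0}=11/4$ rather than $11/8$, and $K$ must then be computed with the enlarged constant in $\lambda$ rather than $3$.
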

\noindent\textbf{Proof of Theorem \ref{th1}}
According to the Restricted Eigenvalue (RE) condition, here the following inequality holds
\begin{align*}
	\frac{1}{n}(\beta^{*}-\hat{\beta})^{T}(X^{T}X+Z^{T}Z)(\beta^{*}-\hat{\beta})+\lambda\Vert\hat{\beta}_{S^{c}}\Vert_{1}&\geq\frac{1}{n}(\beta^{*}-\hat{\beta})^{T}X^{T}X(\beta^{*}-\hat{\beta})\\
	&\geq\kappa\Vert\beta^{*}-\hat{\beta}\Vert_{2}^{2},
\end{align*}
by (\ref{eq10}) and $\Vert x\Vert_{1}\leqslant\Vert x\Vert_{2}\Vert x\Vert^{\frac{1}{2}}_{0}$ , we have:
\begin{align*}
	\kappa\Vert\beta^{*}-\hat{\beta}\Vert_{2}^{2}\leqslant\frac{7}{2}\lambda\Vert\hat{\beta}_{S}-\beta^{*}_{S}\Vert_{1}&\leqslant\frac{7}{2}\lambda\Vert\hat{\beta}_{S}-\beta^{*}_{S}\Vert_{2}\cdot\Vert\hat{\beta}_{S}-\beta^{*}_{S}\Vert_{0}^{\frac{1}{2}},
\end{align*}
hence
\begin{eqnarray*}
	\Vert\beta^{*}-\hat{\beta}\Vert_{2}\leqslant K\sqrt{\frac{qlogp}{n}}.
\end{eqnarray*}
where $K=\frac{21\sqrt{2}\sigma}{2\kappa}$.
  
  For the condition  $\Lambda_{max}\left\lbrace\frac{1}{n}Z^{T}Z\right\rbrace\Vert\beta^{*}\Vert_{\infty}\leqslant\frac{1}{4}\lambda $ in the Lemma\ref{le1}, we can easily know $\Lambda_{max}\left\lbrace\frac{1}{n}Z^{T}Z\right\rbrace=max\{\frac{d_{r}(1-\theta)}{d_{r}+\theta-1},\frac{1}{\theta}-d_{q}\}$ by (\ref{eq11}). Thus we have $\theta\geq f(n,\lambda,\Vert\beta^{*}\Vert_{\infty})$. In other words, we can adjust the parameters $\lambda$ and $\theta$ to ensure that this condition is satisfied. Therefore, the setting of this condition is completely reasonable and easy to achieve.

  Furthermore, regarding the final result of the estimation error bound, we obtain a smaller constant factor K compared to the estimation error bound of SACE. This implies that, under the same data conditions, our estimator consistently achieves a smaller estimation error.Then we prove variable selection consistency of  SPPCSO.

\begin{theorem}\label{th2}
	Let $C=\left(\begin{matrix}
		n^{-1}X_S^\prime X_S, n^{-1}X_S^\prime X_{S^c}\\
		n^{-1}X^\prime_{S^c} X_S, n^{-1}X_{S^c}^\prime X_{S^c} 
	\end{matrix}\right)=
	\left(\begin{matrix}
		C_{11}, C_{12}\\
		C_{21}, C_{22} 
	\end{matrix}\right)$, condition on following 4 conditions, SPPCSO has variable selection consistency:
	$P\{sgn(\hat{\beta})=sgn(\beta^{*})\}\rightarrow 1$ .
	
\end{theorem}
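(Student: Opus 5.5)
The plan is to follow the primal-dual witness construction of \citet{ZY2006} and \citet{JY2010}, applied to the augmented Lasso form from the Proposition. Because SPPCSO equals a Lasso on $(y^{*},X^{*})$, its KKT conditions read
\begin{eqnarray*}
(C+\tfrac{1}{n}Z^{T}Z)(\hat{\beta}-\beta^{*})-\tfrac{1}{n}X^{T}\varepsilon+\tfrac{1}{n}Z^{T}Z\beta^{*}=-\lambda\,\hat{\tau},\qquad \hat{\tau}\in\partial\Vert\hat{\beta}\Vert_{1}.
\end{eqnarray*}
Since $Z$ is supported on the $S\times S$ block, only the upper-left block of the Gram matrix changes. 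It becomes $C_{11}+\frac{1}{n}Z_{S}^{T}Z_{S}=\frac{1}{n}(X_{S}^{T}X_{S}+P)$, while $C_{21}$ and $C_{22}$ stay the same. The four conditions I would impose are these:
\begin{itemize}
\item[(i)] $\Lambda_{\min}(C_{11})\geq c_{0}>0$;
\item[(ii)] a modified irrepresentable condition $\Vert C_{21}(C_{11}+\frac{1}{n}Z_{S}^{T}Z_{S})^{-1}(\mathrm{sgn}(\beta^{*}_{S})+\frac{1}{n\lambda}Z_{S}^{T}Z_{S}\beta^{*}_{S})\Vert_{\infty}\leq 1-\eta$;
\item[(iii)] the scaling $q\log p/n\to0$ together with $\lambda\to0$ and $\lambda\sqrt{n}/\sqrt{\log p}\to\infty$, consistent with $\lambda=3\sigma\sqrt{2\log p/n}$;
\item[(iv)] a beta-min condition $\min_{j\in S}|\beta^{*}_{j}|\geq \rho_{n}$, where $\rho_{n}$ dominates the bias term below.
\end{itemize}

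\textbf{Step 1 (oracle on $S$).} Set $\hat{\beta}_{S^{c}}=0$ and solve the restricted problem on $S$ with $\hat{\tau}_{S}=\mathrm{sgn}(\beta^{*}_{S})$. Write $\tilde{C}_{11}=C_{11}+\frac{1}{n}Z_{S}^{T}Z_{S}$. This gives
\begin{eqnarray*}
\hat{\beta}_{S}-\beta^{*}_{S}=\tilde{C}_{11}^{-1}\Big(\tfrac{1}{n}X_{S}^{T}\varepsilon-\tfrac{1}{n}Z_{S}^{T}Z_{S}\beta^{*}_{S}-\lambda\,\mathrm{sgn}(\beta^{*}_{S})\Big).
\end{eqnarray*}
For sign recovery on $S$ I need $\Vert\hat{\beta}_{S}-\beta^{*}_{S}\Vert_{\infty}<\min_{j\in S}|\beta^{*}_{j}|$.
\begin{itemize}
\item \emph{Noise term.} Each coordinate of $\tilde{C}_{11}^{-1}\frac{1}{n}X_{S}^{T}\varepsilon$ is Gaussian with variance at most $\sigma^{2}/(nc_{0})$, using $\tilde{C}_{11}\succeq C_{11}$ since $P\succeq0$. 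A union bound over $q$ coordinates, as in the proof of Lemma~\ref{le1}, makes it $O_{p}(\sqrt{\log q/n})$.
\item \emph{Deterministic terms.} These are bounded by $\sqrt{q}\,c_{0}^{-1}(\lambda+\frac14\lambda)$, using the condition $\Lambda_{\max}\{\frac1n Z^{T}Z\}\Vert\beta^{*}\Vert_{\infty}\leq\frac14\lambda$ from Lemma~\ref{le1}.
\end{itemize}
Condition (iv) is chosen exactly so that these terms are $o(\rho_{n})$.

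\textbf{Step 2 (strict dual feasibility on $S^{c}$).} Plug the oracle solution into the $S^{c}$ rows of the KKT system. This gives
\begin{eqnarray*}
\lambda\hat{\tau}_{S^{c}}=C_{21}\tilde{C}_{11}^{-1}\Big(\lambda\,\mathrm{sgn}(\beta^{*}_{S})+\tfrac1nZ_{S}^{T}Z_{S}\beta^{*}_{S}\Big)-C_{21}\tilde{C}_{11}^{-1}\tfrac1nX_{S}^{T}\varepsilon+\tfrac1nX_{S^{c}}^{T}\varepsilon,
\end{eqnarray*}
and I must show $\Vert\hat{\tau}_{S^{c}}\Vert_{\infty}<1$ with probability tending to one.
\begin{itemize}
\item The first term is at most $(1-\eta)\lambda$ by (ii).
\item The remaining two terms equal $\frac1nX_{S^{c}}^{T}(I-H)\varepsilon$ for a matrix $H$ with $\Vert I-H\Vert_{op}\le 1$, because $H$ is a shrunken projection. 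They are therefore Gaussian with variance at most $\sigma^{2}/n$, and by the same tail bound used in Lemma~\ref{le1} they are $O_{p}(\sigma\sqrt{\log p/n})$.
\item Condition (iii) makes this $o(\eta\lambda)$, or alternatively I would fix the constant in $\lambda$ so that it is at most $\frac{\eta}{2}\lambda$ with high probability.
\end{itemize}
Uniqueness of the solution follows from strict convexity on $S$, since $\tilde{C}_{11}\succ0$, combined with strict dual feasibility. Hence the witness is the SPPCSO solution, and $P\{\mathrm{sgn}(\hat{\beta})=\mathrm{sgn}(\beta^{*})\}\to1$.

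\textbf{Main obstacle.} The hard step is controlling the bias term $\frac1nZ_{S}^{T}Z_{S}\beta^{*}_{S}$ inside the irrepresentable condition. Unlike the Elastic Net, where the penalty is $\lambda_{2}\beta$ with a tunable small $\lambda_{2}$, here $P$ depends on the data through $\theta$ and the spectrum of $X_{S}^{T}X_{S}$. A further difficulty is that $Z$ is defined using the unknown $S$; the paper's proof of Lemma~\ref{le1} handles this via $\hat{Z}\to Z$, and I would assume it the same way. My first attempt would be to invoke the bound $\Lambda_{\max}\{\frac1nZ^{T}Z\}\Vert\beta^{*}\Vert_{\infty}\leq\frac14\lambda$, adjusted by the $\sqrt{q}$ passage from $\ell_{\infty}$ to $\ell_{2}$ through $\Vert C_{21}\tilde{C}_{11}^{-1}\Vert_{\infty}$. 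That would reduce (ii) to the standard Lasso irrepresentable condition with margin $\eta-\frac14\Vert C_{21}\tilde{C}_{11}^{-1}\Vert_{\infty}$, which is positive provided $\theta$ is chosen close enough to $1$.
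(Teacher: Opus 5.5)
Your argument has the same skeleton as the paper's proof. Both set $\hat\beta_{S^c}=0$ and solve the $S$-block of the KKT system; the paper's $V_S=X_S^TX_S+Z_S^TZ_S$ is your $n\tilde C_{11}$. Both then split the oracle error and the $S^c$ dual variable into a Gaussian part, a $\lambda\,\mathrm{sgn}$ part and a $Z_S^TZ_S\beta^*_S$ bias part; these are the paper's events $T_1,\dots,T_6$. The Gaussian parts are controlled by the same variance and union bounds you use.

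The real difference is in the hypotheses, and there your version is the one that closes. The paper's (C1)--(C4) are Gaussian errors, eigenvalue bounds on $C_{11}+n^{-1}Z_S^TZ_S$, $C_{11}$ and $Z_S^TZ_S$, and $\sqrt{\log q}/(\sqrt n\,b_1)\to0$. There is no irrepresentable condition, and the beta-min requirement only beats the noise.
\begin{itemize}
\item For the term your (ii) handles ($T_5$), the paper uses $\Vert X_j^TX_SV_S^{-1}\Vert_2^2\le 1/N_1$ and then $P\{T_5\}\le P\{\Vert\hat\gamma_S\Vert\ge\lambda N_1/3\}\to0$. But $\Vert\hat\gamma_S\Vert_2=\lambda\sqrt q$ is deterministic, so this proves nothing.
\item For $T_2$ it uses $|e_j^TV_S^{-1}n\hat\gamma_S|\le\lambda/N_1$. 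This treats $\tilde C_{11}^{-1}$ as diagonal, and it would still need $b_1\gtrsim\lambda\asymp\sqrt{\log p/n}$, which (C4) does not imply.
\end{itemize}
Your (ii) and your $\sqrt q\,\lambda$-level beta-min (iv) are exactly what these two steps need. The price is that you establish sign consistency under your conditions, not the paper's, and you should say so explicitly. This cannot be avoided. Condition (C2) forces $\Vert n^{-1}Z_S^TZ_S\Vert_{op}\le R_2/n\to0$, so the $S^c$ inequality is asymptotically the Lasso one. For instance, take $p=3$, $q=2$, $C_{11}=I_2$, $C_{21}=(0.6,0.6)$, $\beta^*_S=(1,1)^T$. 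This satisfies (C1)--(C4) for any $Z_S$ obeying (C2), yet the quantity in your (ii) tends to $1.2$. By symmetry of the Gaussian term, $P\{\mathrm{sgn}(\hat\beta)=\mathrm{sgn}(\beta^*)\}\le 1/2$ for large $n$.

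Two smaller repairs.
\begin{itemize}
\item Your (iii) contradicts itself: $\lambda\sqrt n/\sqrt{\log p}\to\infty$ is incompatible with $\lambda=3\sigma\sqrt{2\log p/n}$. Use your fallback instead. With $\lambda=A\sigma\sqrt{2\log p/n}$, the union bound gives $P\{\Vert n^{-1}X_{S^c}^T(I-H)\varepsilon\Vert_\infty\ge\eta\lambda\}\le 2p^{1-A^2\eta^2}\to0$ once $A\eta>1$ and $p\to\infty$. If you keep the paper's $A=3$, this means $\eta>1/3$.
\item Your Step 1 bound $\frac54\sqrt q\,c_0^{-1}\lambda$ is correct only if you go through $\ell_2$: use $\Vert\tilde C_{11}^{-1}v\Vert_\infty\le c_0^{-1}\Vert v\Vert_2$ and $\Vert n^{-1}Z_S^TZ_S\beta^*_S\Vert_2\le\sqrt q\,\Lambda_{\max}\{n^{-1}Z^TZ\}\Vert\beta^*\Vert_\infty$. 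The shortcut $\Vert Av\Vert_\infty\le\Lambda_{\max}(A)\Vert v\Vert_\infty$, which the paper uses in Lemma~\ref{le1}, does not hold in general; the row-sum norm of a symmetric $q\times q$ matrix can exceed $\Lambda_{\max}(A)$ by a factor up to $\sqrt q$.
\end{itemize}
Keeping the $\sqrt q$ changes the margin in your closing heuristic to $\eta-\frac{\sqrt q}{4}\Vert C_{21}\tilde C_{11}^{-1}\Vert_\infty$. Since you assume (ii) outright, this affects only the motivation, not the proof.
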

\begin{itemize}
	\item[(C1)]The random error vectors of the model are independently and identically distributed in the normal distribution $N(0,\sigma^{2})$.
	\item[(C2)]There exists positive constants $N_{1}$,$N_{2}$,$M_{1}$,$M_{2}$,$R_{1}$,$R_{2}$ such that $0<N^{1}<\Lambda_{min}(C_{11}+n^{-1}Z^{T}_{S}Z_{S})<N_{2}<\infty,0<M_{1}<\Lambda_{max}(C_{11})<M_{2}<\infty,0<R_{1}<\Lambda_{max}(Z^{T}_{S}Z_{S})<R_{2}<\infty$ holds.
	\item[(C3)]$b_{1}=min\{\vert\beta^{*}_{j}\vert,j\in S\}$.
	\item[(C4)]$q=O(exp(n^{c_{2}})),0<c_{2}<1$ and $\sqrt{log(q)}/\sqrt{n}b_{1}\rightarrow0$.
\end{itemize}
\noindent\textbf{Proof of Theorem \ref{th2}}
From the KKT condition we know that:
\begin{eqnarray}\label{eq04}
	\left\{\begin{array}{l}
		\frac{1}{n}X^{T}_{j}(y-X\hat{\beta})=\frac{1}{n}Z^{T}Z\hat{\beta}_{j}+\lambda sgn(\hat{\beta}_{j}),\hat{\beta}_{j}\neq0,\\
		\vert\frac{1}{n} X^{T}_{j}(y-X\hat{\beta})\vert<\lambda,\hat{\beta}_{j}=0.
	\end{array}\right.
\end{eqnarray}

To prove consistency in variable selection, we need only show that $(\hat{\beta}^{T}_{S},0^{T})^{T}$ satisfies \ref{eq04}. Set $\hat{\gamma}_{S}=(\lambda sgn(\hat{\beta}_{j}),j\in S)^{T}$, we have:
\begin{eqnarray}\label{eq05}
	\left\{\begin{array}{l}
		\frac{1}{n}X^{T}_{S}(y-X_{S}\hat{\beta}_{S})=\frac{1}{n}Z_{S}^{T}Z_{S}\hat{\beta}_{S}+\hat{\gamma}_{S},\\
		\vert\frac{1}{n} X^{T}_{j}(y-X_{S}\hat{\beta}_{S})\vert<\lambda,\forall j \in S^{c}.
	\end{array}\right.
\end{eqnarray}

Set $Z_{S}^{T}Z_{S}+X_{S}^{T}X_{S}=V_{S}$, $H_{S}=I_{q}-X_{S}V_{S}^{-1}X_{S}$, from \ref{eq05} and $y=X_{S}\beta^{*}_{S}+\varepsilon$ , therefore
\begin{eqnarray*}
	\hat{\beta}_{S}=\beta^{*}_{S}+V_{S}^{-1}(X^{T}_{S}\varepsilon-n\hat{\gamma}_{S}-Z_{S}^{T}Z_{S}\beta^{*}_{S}),
\end{eqnarray*}
\begin{eqnarray*}
	y-X_{S}\hat{\beta}_{S}=H_{S}\varepsilon+X_{S}V_{S}^{-1}(n\hat{\gamma}_{S}+Z_{S}^{T}Z_{S}\beta^{*}_{S}),
\end{eqnarray*}
we need to show that the following inequality holds with high probability: 
\begin{eqnarray}\label{eq06}
	\left\{\begin{array}{l}
		\vert\hat{\beta}_{j}-\beta^{*}_{j}\vert<\vert\beta^{*}_{j}\vert,\forall j\in S,\\
		\frac{1}{n}\vert X_{j}^{T}H_{S}\varepsilon+X_{j}^{T}X_{S}V_{S}^{-1}(n\hat{\gamma}_{S}+Z_{S}^{T}Z_{S}\beta^{*}_{S})\vert<\lambda,\forall j\in S^{c}.
	\end{array}\right.
\end{eqnarray}
Then we prove the following cross problem holds with high probability:
\begin{align}\label{eq07}
	\vert\hat{\beta}_{j}-\beta^{*}_{j}\vert&=V_{S}^{-1}(X^{T}_{S}\varepsilon-n\hat{\gamma}_{S}-Z_{S}^{T}Z_{S}\beta^{*}_{S})\\
	&\leqslant\vert V_{S}^{-1}X_{S}^{T}\varepsilon\vert+\vert V_{S}^{-1}n\hat{\gamma}_{S}\vert+\vert V_{S}^{-1}Z_{S}^{T}Z_{S}\beta^{*}_{S}\vert\nonumber\\
	&<\vert\beta^{*}_{j}\vert\nonumber,
\end{align}
and
\begin{align}\label{eq08}
	&\frac{1}{n}\vert X_{j}^{T}H_{S}\varepsilon+X_{j}^{T}X_{S}V_{S}^{-1}(n\hat{\gamma}_{S}+Z_{S}^{T}Z_{S}\beta^{*}_{S})\vert\\
	&\leqslant\frac{1}{n}\vert X_{j}^{T}H_{S}\varepsilon\vert+\vert X_{j}^{T}X_{S}V_{S}^{-1}\hat{\gamma}_{S}\vert+\frac{1}{n}\vert X_{j}^{T}X_{S}V_{S}^{-1}Z_{S}^{T}Z_{S}\beta^{*}_{S}\vert\nonumber\\
	&<\lambda\nonumber.
\end{align}

We can show that the probability of the complementary problems of \ref{eq07} and \ref{eq08} converges to 0, we have
\begin{align*}
	P\{sgn(\hat{\beta})\neq sgn(\beta^{*})\}&\leqslant P\{\vert e^{T}_{j}V_{S}^{-1}X_{S}^{T}\varepsilon\vert\geq\frac{1}{3}\vert\beta^{*}_{j}\vert, \text{for some } j\in S\}\\
	&+P\{\vert e^{T}_{j} V_{S}^{-1}n\hat{\gamma}_{S}\vert\geq\frac{1}{3}\vert\beta^{*}_{j}\vert, \text{for some } j\in S\}\\
	&+P\{\vert e^{T}_{j}V_{S}^{-1}Z_{S}^{T}Z_{S}\beta^{*}_{S}\vert\geq\frac{1}{3}\vert\beta^{*}_{j}\vert, \text{for some } j\in S\}\\
	&+P\{\frac{1}{n}\vert X_{j}^{T}H_{S}\varepsilon\vert\geq\frac{1}{3}\lambda, \text{for some } j\in S^{c}\}\\
	&+P\{\vert X_{j}^{T}X_{S}V_{S}^{-1}\hat{\gamma}_{S}\vert\geq\frac{1}{3}\lambda, \text{for some } j\in S^{c}\}\\
	&+P\{\frac{1}{n}\vert X_{j}^{T}X_{S}V_{S}^{-1}Z_{S}^{T}Z_{S}\beta^{*}_{S}\vert\geq\frac{1}{3}\lambda, \text{for some } j\in S^{c}\}\\
	&=P\{T_{1}\}+P\{T_{2}\}+P\{T_{3}\}+P\{T_{4}\}+P\{T_{5}\}+P\{T_{6}\}\longrightarrow0.
\end{align*}

By condition C(2) ,we have
\begin{eqnarray*}
	\Lambda_{max}(V_{S}^{-1})=\Lambda_{min}^{-1}(V_{S})=\frac{1}{n\Lambda_{min}(C_{11}+n^{-1}Z_{S}^{T}Z_{S})}<\frac{1}{nN_{1}},
\end{eqnarray*}
condition on $\Vert e_{j}^{T}V_{S}^{-1}X_{S}^{T}\Vert_{2}^{2}\leqslant\frac{M_{2}}{nN_{1}^{2}}$ and condition (C4) ,we have
\begin{eqnarray*}
	P\{T_{1}\}\leqslant2\exp(\log q-\frac{nb_{1}^{2}N_{1}^{2}}{18M_{2}\sigma^{2}})\longrightarrow0.
\end{eqnarray*}

Since $\vert e^{T}_{j} V_{S}^{-1}n\hat{\gamma}_{S}\vert\leqslant\frac{\vert e_{j}^{T}\hat{\gamma_{S}}\vert}{N_{1}}=\frac{\lambda}{N_{1}}$ and $var(\frac{\lambda}{N_{1}})=0$, we have:
\begin{eqnarray*}
	P\{T_{2}\}\leqslant2\exp(\log q-\frac{b_{1}^{2}}{18var(\frac{\lambda}{N_{1}})})\longrightarrow0.
\end{eqnarray*}

Similar to 	$P\{T_{2}\}$ and combined with condition (C2) , $P\{T_{3}\}\longrightarrow0$ is easily accessible. Due to $X_{S}(V_{S})^{-1}X_{S}^{T}=X_{S}X_{S}^{T}(X_{S}X_{S}^{T}+Z_{S}Z_{S}^{T})^{-1}$ and $X_{j}^{T}X_{j}/n=1$, we obtain $\Lambda_{max}(H_{S})<1$ and $\Vert X_{j}^{T}H_{S} \Vert_{2} <\sqrt{n}$, therefore 

\begin{eqnarray*}
	P\{T_{4}\}\leqslant2\exp(\log q-\frac{n\lambda^{2}}{18\sigma^{2}})\longrightarrow0.
\end{eqnarray*}

Since $\Vert X_{j}^{T}X_{S}V_{S}^{-1}\Vert_{2}^{2}\leqslant\frac{X_{j}^{T}X_{S}V_{S}^{-1}X_{S}^{T}X_{j}}{nN_{1}}\leqslant \frac{1}{N_{1}} $, therefore $P\{T_{5}\}\leqslant P\{\Vert \hat{\gamma_{S}}\Vert \geq\frac{\lambda N_{1}}{3}\}\longrightarrow0$.

\noindent

Condition on condition(C2), we obtain $\Vert X_{j}^{T}X_{S}V_{S}^{-1}Z_{S}^{T}Z_{S}\Vert_{2}^{2}\leqslant\frac{R_{2}^{2}X_{j}^{T}X_{S}V_{S}^{-1}X_{S}^{T}X_{j}}{nN_{1}}\leqslant\frac{R_{2}^{2}}{N_{1}}$, therefore
\begin{align*}
	&P\{T_{6}\}\leqslant P\{\frac{1}{n}\Vert X_{j}^{T}X_{S}V_{S}^{-1}Z_{S}^{T}Z_{S}\Vert_{2}\Vert\beta^{*}_{S}\Vert_{2}\geq\frac{1}{3}\lambda,\exists j\in S_{c}\}\nonumber\\
	&\leqslant P\{\Vert\beta^{*}_{S}\Vert_{2}\geq\frac{n\sqrt{N_{1}}\lambda}{3R_{2}},\exists j\in S_{c}\}\longrightarrow0.
\end{align*}

Hence $P\{T_{1}\}+P\{T_{2}\}+P\{T_{3}\}+P\{T_{4}\}+P\{T_{5}\}+P\{T_{6}\}\longrightarrow0$ and $P\{sgn(\hat{\beta})=sgn(\beta^{*})\}\longrightarrow1$ .

\section{Simulations}\label{sec4}
\subsection{The coordinate decent algorithm for the SPPCSO}

\begin{algorithm}
	\caption{The coordinate decent algorithm for the SPPCSO}
	\begin{algorithmic}[1]
			\Require Initial values $\hat{\beta}^{0}$, given parameters $\lambda$, $\theta$, tolerance $\epsilon=10^{-4}$.Define a set $\xi^{0}=\{j\in \{1,\cdots,p:\hat{\beta}_{j}^{0}\neq0\}\}$ and an artificial dataset.
		
		\begin{eqnarray*}
			X^* =\left(\begin{matrix}
				X\\ Z
			\end{matrix}\right), \quad
			y^* =\left(\begin{matrix}
				y\\ 0
			\end{matrix}\right).
		\end{eqnarray*}
		where $Z=\sqrt{K}U^{T}$ and $	K=diag(\frac{d_{1}(1-\theta)}{d_{1}+\theta-1},\cdots,\frac{d_{r}(1-\theta)}{d_{r}+\theta-1},\frac{1}{\theta}-d_{r+1},\cdots,\frac{1}{\theta}-d_{p})$ , U is the orthogonal matrix obtained by performing singular value decomposition on $X^{T}_{\xi^{0} }X_{\xi^{0}}$, $d_{i}$ is the $i$-th eigenvalue of $X^{T}_{\xi^{0} }X_{\xi^{0}}$.
		\Ensure Optimized coefficients $\hat{\beta}^{SPPCSO}$
		
		\State Initialize $\hat{\beta}^{0} = \hat{\beta}^{Lasso}$ and set iteration counter $k = 0$
		\Repeat
		\For{each coefficient $j = 1$ to $p$}
		\State Compute the partial residual $r_j =( X^{*}_j)^{T}(y^{*} - X^{*}_{-j} \hat{\beta}^{(k)}_{-j})$
		\State Update $\hat{\beta}^{k+1}_j=S(r_j, \lambda)$, where$S(r, \lambda)=sign(r)(\vert r \vert-\lambda)_{+}$
		\EndFor
		\State Check for convergence: \If{$\|\hat{\beta}^{(k+1)} - \hat{\beta}^{(k)}\| < \epsilon$ }
		\State $\hat{\beta}^{SPPCSO}=\hat{\beta}^{k+1}$
		\State Stop the algorithm 
		\EndIf
		\State Increment $k$
		\Until{convergence}
		\Return $\hat{\beta}^{SPPCSO}$
	\end{algorithmic}
\end{algorithm}
The algorithms for optimization-type problems are the coordinate descent algorithm, forward stepwise algorithm, least angle regression algorithm, and so on. In this paper, the coordinate descent algorithm is used to solve the SPPCSO estimator. Coordinate descent is a non-gradient optimization algorithm. The algorithm performs a one-dimensional search in one coordinate direction at the current point in each iteration to find the local minima of a function. The Picasso package is used in the program to implement the computation, the details of the algorithm used in this paper are as follows:

\begin{figure}[!htbp]
	\centering
	\includegraphics[width=0.78\textwidth]{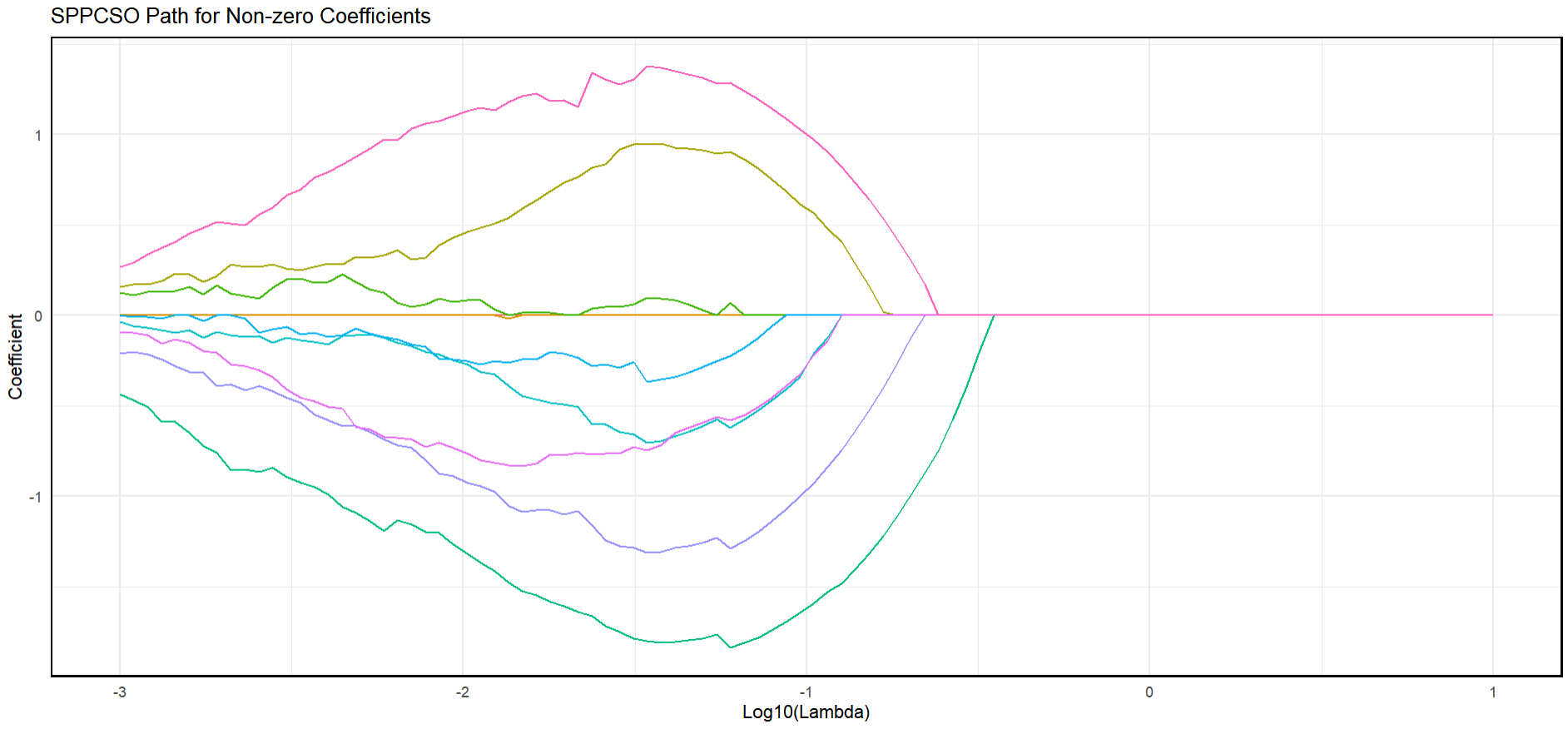}
	\caption{Solution path of the SPPCSO with respect to the parameter $\lambda$}
	\label{Fig1}
\end{figure}
In the process of solving the SPPCSO estimator, we choose Lasso estimation as the initial estimation because Lasso has variable selection consistency under irrepresentable conditions, which ensures that the final set of selected important variables converges to the true set of important variables with probability under the same parameter $\lambda$.

\begin{figure}[htbp]
	\centering
	\begin{minipage}{0.45\textwidth}
		\centering
		\includegraphics[width=\textwidth]{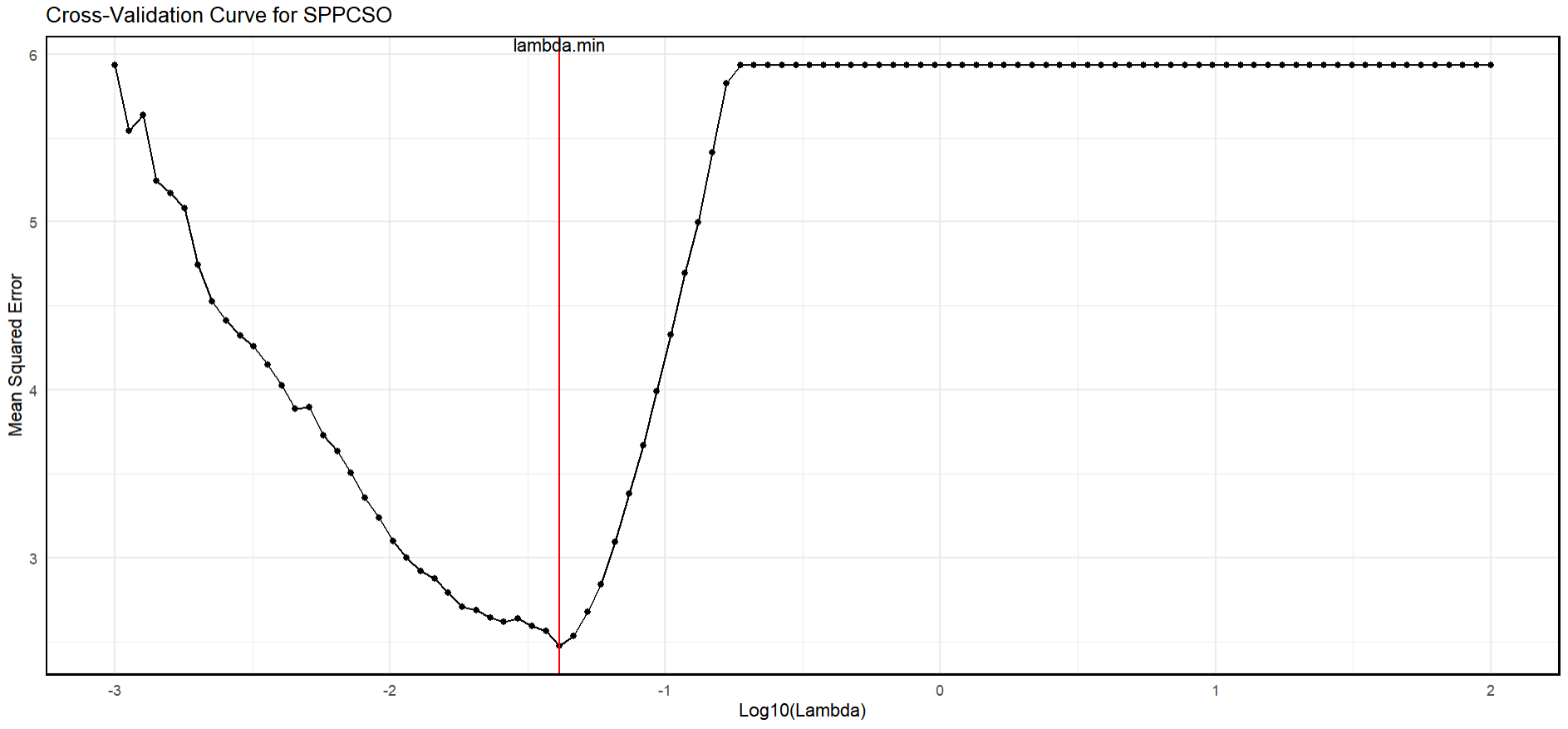}
		\caption{ Cross-validation curve with respect to the parameter $\lambda$ }
		\label{Fig2}
	\end{minipage}\hfill
	\begin{minipage}{0.45\textwidth}
		\centering
		\includegraphics[width=\textwidth]{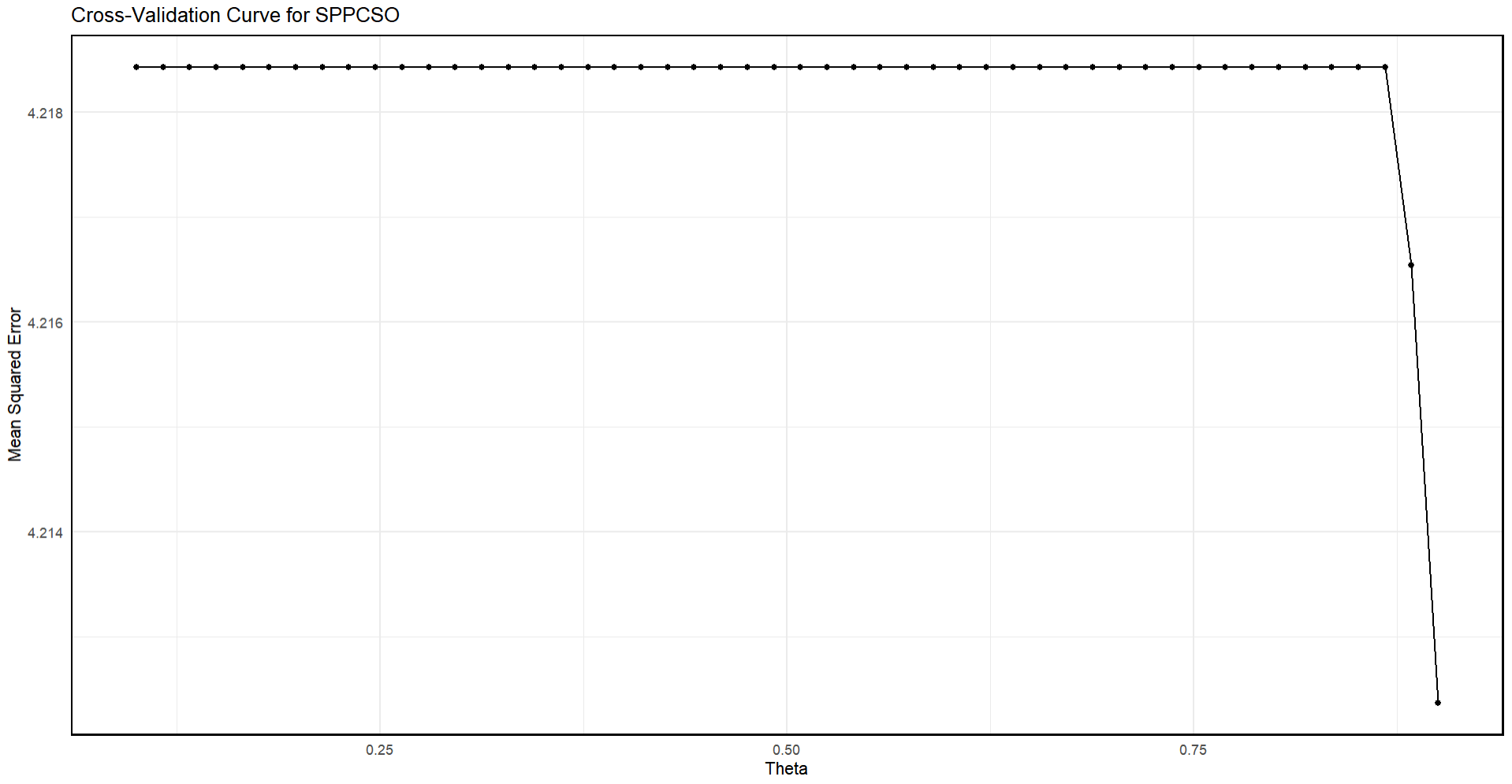}
		\caption{Cross-validation curve with respect to the parameter $\theta$ }
		\label{Fig3}
	\end{minipage}
\end{figure}
We employ a 5-fold cross-validation approach to select $\theta$ and $\lambda$. We define a grid for $\theta$ with values ranging from 0 to 1 in 0.1 increments. For each combination of parameters, we train the model on the training data and calculate the mean square error (MSE) on the validation set across all folds. The combination yielding the lowest average MSE is deemed optimal.

The SPPCSO solution path diagram(Fig.\ref{Fig1}) illustrates how $\lambda$ influences non-zero coefficient estimates in high-dimensional data. At larger $\lambda$ values, strong regularization forces all coefficients toward zero, simplifying the model. As $\lambda$ is reduced, this regularization effect weakens, allowing some coefficients to revert to their actual values. Further reduction in $\lambda$ may lead to a brief increase and subsequent decrease in coefficients due to the model capturing noise or redundant information, particularly in high-dimensional settings. This suggests that larger $\lambda$ values effectively suppress noise and redundancy, whereas too small a $\lambda$ might cause the model to overfit unnecessary information. From the solution path diagram, the optimal $\lambda$ interval appears to be between 0.01 and 0.1.

The Fig.\ref{Fig2} displays the mean square error (MSE) across different $\lambda$ values. The optimal $\lambda$ ($\lambda.min$), where MSE is minimized, is approximately 0.04. This indicates the best regularization strength for balancing performance between training and validation sets. Similarly, Fig.\ref{Fig3} suggests that the optimal $\theta$ should be as close to 1 as possible, where the MSE reaches its minimum.

\subsection{Some numerical experiments}

In this section, we generate simulation data to assess the performance of the proposed estimator, especially in finite sample scenarios. We compare our estimator with others, including Lasso, MCP, SCAD, Enet, Mnet, SACE, and GCACE, using seven criteria: (1) $\Vert\hat{\beta}-\beta^{*}\Vert_{2}$ (estimation error), (2)standard deviation of estimation error, (3) $\frac{1}{n}\Vert X(\hat{\beta}-\beta^{*})\Vert_{2}^{2}$ (prediction error), (4) standard deviation of prediction error, (5)TPR, (6)TNR, and (7)TMR.

TPR, TNR, and TMR are defined as:
\begin{align}
	\text{TPR}&=\frac{1}{N}\sum_{i=1}^{N}\frac{\#\{\hat{\beta}_j\neq0,j\in \mathcal{S}\}^i}{q_n},\nonumber\\
	\text{TNR}&=\frac{1}{N}\sum_{i=1}^{N}\frac{\#\{\hat{\beta}_j=0,j\in \mathcal{S}^c\}^i}{p_n-q_n},\nonumber\\
	\text{TMR}&=\frac{1}{N}\sum_{i=1}^{N}I\{\hat{\beta}_\mathcal{S}\neq \mathbf{0} \text{ and } \hat{\beta}_{\mathcal{S}^c}=\mathbf{0}\}^i,\nonumber
\end{align}
where i denotes the i-th repetition of the experiment. For the above seven evaluation indicators, indicators (1) (2) (3) (4) are close to 0 the better, however, indicators (5) (6) (7) are as close to 1 the better.

Two cases of high-dimensional correlation data are considered to generate predictors:

\textit{Example 1} In this example, we consider the case where there is partial orthogonality between the design array vectors and generate data $X\sim N_{p}(0,\Sigma)$ and $\varepsilon\sim N(0,\sigma^{2})$ , where $\Sigma_{jk}=0.95^{|j-k|},j,k=1,2,\cdots,15$ and $\Sigma_{jk}=0.95^{|j-k|},j,k=16,17,\cdots,p$ we set $\beta^{*}$ as $\beta^{*}_{S}\sim Unif(2,3),S=\{1,2,3,\cdots,15\}$. Set $\sigma=0.5,1$ and  $2 $.

\textit{Example 2}  In this example, we consider the group effects structural model. First, we define three groups as follows:  $X_{j}=x_{k}+\epsilon_{jk},k=1,6,11,j=k,\cdots,k+4$, where $x_{k}\sim N(0,1)$ and $\epsilon_{jk}\sim N(0,0.01)$ . For the remaining $p-15$ dimensions, we introduce different correlation structures by setting$\rho=0.5,0.75$  and $ 0.95$ .We set $\beta^{*}$ as $\beta^{*}_{S}\sim Unif(2,3),S=\{1,2,3,\cdots,15\}$.

	\begin{table}[htbp]
		\centering
		\caption{Average estimation error and  standard deviation for Example 1} \label{tab2}
		\label{tab:estimation_error}
		\begin{tabular}{lcccccc}
			\toprule
			\multirow{2}{*}{\textbf{Method}} & \multicolumn{2}{c}{\textbf{$\sigma$=0.5}} & \multicolumn{2}{c}{\textbf{$\sigma$=1}} & \multicolumn{2}{c}{\textbf{$\sigma$=2}} \\
			\cmidrule(lr){2-3} \cmidrule(lr){4-5} \cmidrule(lr){6-7}
			& \textbf{Est Error} & \textbf{Std Dev} & \textbf{Estn Error} & \textbf{Std Dev} & \textbf{Est Error} & \textbf{Std Dev} \\
			\midrule
			LASSO  & 0.9571 & 0.2083 & 1.7573 & 0.4200 & 3.6713 & 0.8382 \\
			MCP    & 8.0887 & 2.2153 & 9.0925 & 3.0784 & 9.9405 & 1.7509 \\
			SCAD   & 7.8636 & 3.2870 & 8.3720 & 2.4588 & 9.3404 & 1.7447 \\
			Enet   & 0.7861 & 0.1697 & 1.2426 & 0.3160 & 1.9663 & 0.8021 \\
			Mnet   & 0.6252 & 0.1383 & 0.9063 & 0.1692 & 1.2092 & 0.3593 \\
			SACE   & 0.9564 & 0.2071 & 1.7587 & 0.4274 & 3.6606 & 0.8516 \\
			GSACE  & 7.1896 & 1.4645 & 7.7451 & 1.595A3 & 8.8941 & 1.3764 \\
			\textbf{SPPCSO} & 1.0098 & 0.1504 & 1.0472 & 0.1593 & \textbf{1.1677} & \textbf{0.2435} \\
			\bottomrule
		\end{tabular}
	\end{table}

	\begin{table}[htbp]
		\centering
		\caption{Average prediction error and  standard deviation for Example 1} \label{tab3}
		\label{tab:prediction_error}
		\begin{tabular}{lcccccc}
			\toprule
			\multirow{2}{*}{\textbf{Method}} & \multicolumn{2}{c}{\textbf{$\sigma$=0.5}} & \multicolumn{2}{c}{\textbf{$\sigma$=1}} & \multicolumn{2}{c}{\textbf{$\sigma$=2}} \\
			\cmidrule(lr){2-3} \cmidrule(lr){4-5} \cmidrule(lr){6-7}
			& \textbf{Pre Error} & \textbf{Std Dev} & \textbf{Pre Error} & \textbf{Std Dev} & \textbf{Pre Error} & \textbf{Std Dev} \\
			\midrule
			LASSO  & 0.3252 & 0.0605 & 1.2850 & 0.2187 & 5.2246 & 0.9739 \\
			MCP    & 4.4767 & 13.2624 & 8.2810 & 21.9644 & 9.1475 & 2.5962 \\
			SCAD   & 8.3170 & 26.1689 & 7.0328 & 24.4160 & 8.2789 & 2.3673 \\
			Enet   & 0.3208 & 0.0554 & 1.2479 & 0.2025 & 4.9183 & 0.9608 \\
			Mnet   & 0.2918 & 0.0462 & 1.1589 & 0.1976 & 4.6989 & 0.7843 \\
			SACE   & 0.3249 & 0.0583 & 1.2850 & 0.2180 & 5.2173 & 0.9611 \\
			GSACE  & 2.4808 & 1.1554 & 3.6209 & 1.4004 & 7.6697 & 1.5058 \\
			\textbf{SPPCSO} & 0.5207 & 0.1064 & 1.3214 & 0.2289 &\textbf{ 4.6958} &\textbf{ 0.8919} \\
			\bottomrule
		\end{tabular}
	\end{table}

	\begin{table}[htbp]
		\centering
		\caption{TPR, TNR, and TMR for Example 1}
		\label{tab4}
		\label{tab:tpr_tnr_tmr}
		\begin{tabular}{lccccccccc}
			\toprule
			\multirow{2}{*}{\textbf{Method}} & \multicolumn{3}{c}{\textbf{$\sigma$=0.5}} & \multicolumn{3}{c}{\textbf{$\sigma$=1}} & \multicolumn{3}{c}{\textbf{$\sigma$=2}} \\
			\cmidrule(lr){2-4} \cmidrule(lr){5-7} \cmidrule(lr){8-10}
			& \textbf{TPR} & \textbf{TNR} & \textbf{TMR} & \textbf{TPR} & \textbf{TNR} & \textbf{TMR} & \textbf{TPR} & \textbf{TNR} & \textbf{TMR} \\
			\midrule
			LASSO  & 1.000 & 0.982 & 0.138 & 1.000 & 0.987 & 0.138 & 0.989 & 0.988 & 0.138 \\
			MCP    & 0.653 & 0.997 & 0.000 & 0.619 & 0.992 & 0.000 & 0.538 & 0.996 & 0.000 \\
			SCAD   & 0.698 & 0.983 & 0.013 & 0.661 & 0.987 & 0.000 & 0.577 & 0.987 & 0.000 \\
			Enet   & 1.000 & 0.981 & 0.138 & 1.000 & 0.985 & 0.125 & 0.999 & 0.982 & 0.088 \\
			Mnet   & 1.000 & 0.998 & 0.488 & 1.000 & 0.997 & 0.525 & 0.998 & 0.994 & 0.200 \\
			SACE   & 1.000 & 0.984 & 0.113 & 1.000 & 0.986 & 0.113 & 0.989 & 0.985 & 0.038 \\
			GSACE  & 0.701 & 0.998 & 0.000 & 0.677 & 0.996 & 0.000 & 0.603 & 0.994 & 0.000 \\
			\textbf{SPPCSO} & \textbf{1.000} & \textbf{1.000} & \textbf{0.988} & \textbf{1.000} &\textbf{ 0.999} & \textbf{0.700} & \textbf{1.000} & \textbf{0.992} & \textbf{0.225} \\
			\bottomrule
		\end{tabular}
	\end{table}

In both of these illustrative instances, we maintain a constant setting with $n=200$ and $p=600$ to satisfy $p\gg n$. The experiments encompass the comprehensive consideration of all predictor correlations, adhering to both the RE condition and the non-representability condition. Each example is subjected to 100 repetitions, resulting in N = 100, and the outcomes are derived as the average of 100 simulations.

In Example 1, we construct a dataset with a partially orthogonal structure and evaluate the performance of SPPCSO in high-dimensional sparse modeling tasks under three different noise levels, $\sigma=0.5,1,2$. Tables \ref{tab2}, \ref{tab3}, and \ref{tab4} present the estimation error, prediction error, and key variable selection metrics (TNR, TPR, TMR), respectively. 

It can be observed that under different noise intensities, SPPCSO consistently maintains lower estimation errors (Table \ref{tab2}) and prediction errors (Table \ref{tab3}), with significantly lower standard deviations compared to other methods, indicating the robust generalization ability of SPPCSO. Moreover, from the results of TNR, TPR, and TMR, SPPCSO demonstrates outstanding performance in variable selection. Even at a high noise level ($\sigma=2$), it can still correctly identify relevant variables, with its TMR values significantly higher than those of any other method. This simulation illustrates that SPPCSO exhibits excellent stability under partially orthogonal structured data. Even in the presence of high noise, it can still achieve reliable estimation.

\begin{table}[htbp]
	\centering
	\caption{Average estimation error and standard deviation for Example 2}
	\label{tab5}
	\label{tab:estimation_error_example2}
	\begin{tabular}{lcccccc}
		\toprule
		\multirow{2}{*}{\textbf{Method}} & \multicolumn{2}{c}{\textbf{$\rho$=0.5}} & \multicolumn{2}{c}{\textbf{$\rho$=0.75}} & \multicolumn{2}{c}{\textbf{$\rho$=0.95}} \\
		\cmidrule(lr){2-3} \cmidrule(lr){4-5} \cmidrule(lr){6-7}
		& \textbf{Est Error} & \textbf{Std Dev} & \textbf{Est Error} & \textbf{Std Dev} & \textbf{Est Error} & \textbf{Std Dev} \\
		\midrule
		LASSO  & 4.4559  & 1.3309  & 4.5233  & 1.0556  & 4.0679  & 0.9334 \\
		MCP    & 19.1823 & 0.7085  & 19.3239 & 0.6469  & 19.2281 & 0.6575 \\
		SCAD   & 19.1818 & 0.7072  & 19.3295 & 0.6493  & 19.2287 & 0.6628 \\
		Enet   & 2.2640  & 0.6605  & 2.5113  & 0.8459  & 2.1949  & 0.8089 \\
		Mnet   & 1.4280  & 1.1497  & 1.6669  & 1.4552  & 1.3523  & 1.0139 \\
		SACE   & 4.2984  & 0.8282  & 4.3517  & 0.9619  & 3.9473  & 0.8972 \\
		GSACE  & 19.1822 & 0.7009  & 19.2595 & 0.6774  & 19.1145 & 0.9082 \\
		\textbf{SPPCSO} & 	\textbf{1.2182}  &	\textbf{ 0.4172}  & \textbf{1.1597}  & 	\textbf{0.3494}  & 	\textbf{1.1147}  & 	\textbf{0.2718} \\
		\bottomrule
	\end{tabular}
\end{table}

\begin{table}[htbp]
	\centering
	\caption{Average prediction error and standard deviation for Example 2}
	\label{tab6}
	\label{tab:prediction_error_example2}
	\begin{tabular}{lcccccc}
		\toprule
		\multirow{2}{*}{\textbf{Method}} & \multicolumn{2}{c}{\textbf{$\rho$=0.5}} & \multicolumn{2}{c}{\textbf{$\rho$=0.75}} & \multicolumn{2}{c}{\textbf{$\rho$=0.95}} \\
		\cmidrule(lr){2-3} \cmidrule(lr){4-5} \cmidrule(lr){6-7}
		& \textbf{Pre Error} & \textbf{Std Dev} & \textbf{Pre Error} & \textbf{Std Dev} & \textbf{Pre Error} & \textbf{Std Dev} \\
		\midrule
		LASSO  & 1.5626  & 0.5447  & 1.5407  & 0.2755  & 1.3384  & 0.2503 \\
		MCP    & 4.8789  & 0.7955  & 4.9638  & 0.8179  & 4.8581  & 0.7325 \\
		SCAD   & 4.8857  & 0.7807  & 4.9612  & 0.8239  & 4.8676  & 0.7459 \\
		Enet   & 1.3901  & 0.2449  & 1.3847  & 0.2449  & 1.2293  & 0.2139 \\
		Mnet   & 1.3837  & 0.2623  & 1.3720  & 0.2376  & 1.2027  & 0.2174 \\
		SACE   & 1.5145  & 0.2823  & 1.5162  & 0.2892  & 1.3161  & 0.2347 \\
		GSACE  & 4.9215  & 0.7946  & 5.0052  & 0.7835  & 4.9417  & 0.7739 \\
		\textbf{SPPCSO} & 1.6554  & 0.4515  & 1.5743  & 0.3461  & 1.2788  & 0.2509 \\
		\bottomrule
	\end{tabular}
\end{table}

	\begin{table}[htbp]
		\centering
		\caption{TPR, TNR, and TMR for Example 2}
		\label{tab7}
		\label{tab:tpr_tnr_tmr_example2}
		\begin{tabular}{lccccccccc}
			\toprule
			\multirow{2}{*}{\textbf{Method}} & \multicolumn{3}{c}{\textbf{$\rho$=0.5}} & \multicolumn{3}{c}{\textbf{$\rho$=0.75}} & \multicolumn{3}{c}{\textbf{$\rho$=0.95}} \\
			\cmidrule(lr){2-4} \cmidrule(lr){5-7} \cmidrule(lr){8-10}
			& \textbf{TPR} & \textbf{TNR} & \textbf{TMR} & \textbf{TPR} & \textbf{TNR} & \textbf{TMR} & \textbf{TPR} & \textbf{TNR} & \textbf{TMR} \\
			\midrule
			LASSO  & 0.983  & 0.956  & 0.013  & 0.988  & 0.964  & 0.000  & 0.991  & 0.970  & 0.013 \\
			MCP    & 0.200  & 0.999  & 0.000  & 0.200  & 0.998  & 0.000  & 0.200  & 0.998  & 0.000 \\
			SCAD   & 0.200  & 0.994  & 0.000  & 0.200  & 0.994  & 0.000  & 0.200  & 0.995  & 0.000 \\
			Enet   & 1.000  & 0.962  & 0.000  & 0.999  & 0.966  & 0.000  & 0.999  & 0.970  & 0.013 \\
			Mnet   & 0.981  & 0.986  & 0.025  & 0.968  & 0.988  & 0.038  & 0.983  & 0.991  & 0.038 \\
			SACE   & 0.988  & 0.956  & 0.013  & 0.987  & 0.964  & 0.000  & 0.991  & 0.972  & 0.000 \\
			GSACE  & 0.200  & 0.996  & 0.000  & 0.201  & 0.996  & 0.000  & 0.204  & 0.996  & 0.000 \\
			\textbf{SPPCSO} & \textbf{1.000} & \textbf{0.997} & \textbf{0.363} & \textbf{1.000} & \textbf{0.997} & \textbf{0.350} & \textbf{1.000} & \textbf{0.993} & \textbf{0.138} \\
			\bottomrule
		\end{tabular}
	\end{table}

\begin{figure}[!htbp]
	\centering
	\includegraphics[width=1\textwidth]{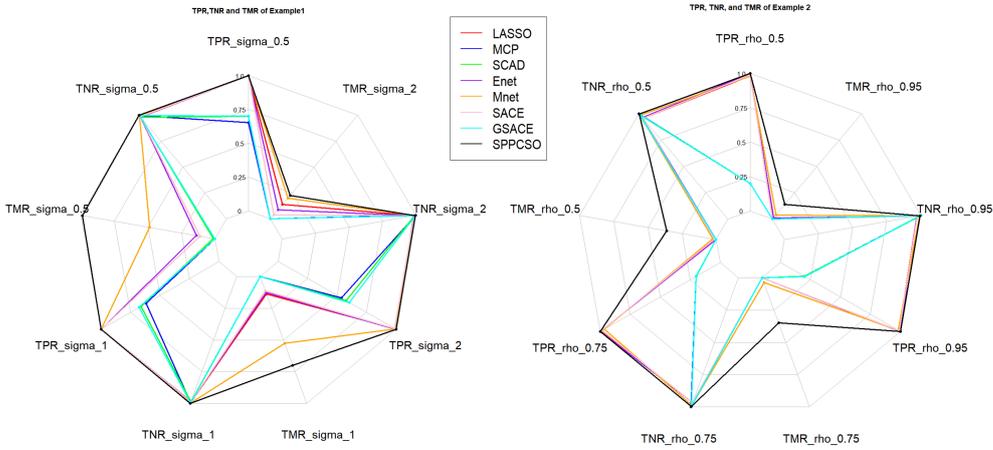}
	\caption{Performance comparison of methods. The image on the left is the radar chart of TPR, TNR, and TMR values for Example 1, while the one on the right corresponds to Example 2.}
	\label{exampel2}
\end{figure}

In Example 2, we further tested the adaptability of SPPCSO to the structure of group effects data by simulating the ($\rho=0.5,0.75,0.95$) group of noise variables set in the group effects structure data. Tables \ref{tab5}, \ref{tab6}, and \ref{tab7} provide a detailed summary of the experimental results. 

In terms of estimation error and its standard deviation, SPPCSO consistently outperforms all other methods at different levels of noise-variable correlation. Even in the case of a high correlation of $\rho=0.95$, its estimation error is still significantly lower than other methods (1.1147 vs. Lasso 4.0679, MCP 19.2281). This suggests that SPPCSO can still accurately distinguish between signal and noise variables even in a high covariance environment, rather than selecting the wrong variable by correlation interference. 

From the perspective of variable selection, even under high correlation ($\rho=0.95$), SPPCSO still maintains the highest TPR (1.000) and relatively high TMR (0.138), which ensures the high precision of variable selection. In contrast, nonconvex methods such as MCP and SCAD suffer from a severe lack of TMR (0.000) under such conditions.

This phenomenon can be attributed to the fact that SPPCSO combines principal component regression (PCR) with the $L_1$ penalty. This combination allows it to strike a balance between sparsity and information retention and thus is more adaptable to the group effects structure than traditional methods based on $L_1/L_2$ penalties (e.g., Enet, Mnet). This simulation further confirms that our model has a significant advantage in highly correlated data structures, with a better feature screening mechanism that effectively discards redundant variables. In the case of ultra-high dimensional data, the selection of key features is still stable.

				\section{Empirical analysis}\label{sec5}
				To illustrate the predictive advantages of SPPCSO in practical applications, we apply the SPPCSO to 120 samples of rat gene expression data reported by \cite{Seta2006}. This data includes gene expression values for 31,042 probes. The primary objective of this analysis was to examine how the expression of TRIM32 (probe 1289163 at), a gene known to cause inherited diseases of the human retina is dependent on the expression of other genes. Probes that were not expressed in the eye or had insufficient change were excluded from the set of 31,042 probes following the methods of \cite{HMZ2008}, \cite{LNP2014}, and \cite{MLT2017}: removing each probe with a maximum expression value in 120 rats that were less than the 25th percentile of the entire set of expression values and selecting probes that had at least a two-fold change in the expression level in the 120 rats of the probes that had at least a two-fold change in expression level in the 120 rats. There were 18986 probes remaining after this process. However, for ultra-high dimensional data, the estimates obtained using the variable selection method are less precise and computationally expensive, and to reduce the computational cost, the 3000 genes with the highest variance in expression values were selected after the above process. For comparison, we randomly selected 60 samples as the training set and re-selected 60 data as the test set. 
				
				The mean absolute prediction error (MAPE) was used to evaluate the prediction effect of different methods. 
				\begin{eqnarray*}
					\mathrm{MAPE}=\frac{1}{60}\sum_{a=1}^{100}\frac{1}{n}\sum_{i=1}^{n}\vert\hat{y}_i^a-y_i^a\vert,
				\end{eqnarray*}
				where $a$ denotes the result of the ath sample. We recorded the MAPE values and the number of nonzeroes (NNZ) in the parameter estimates for the training and test sets. In addition, we draw boxplots of the results of 100 repetitions of the experiment and visualize the standard deviation of the MAPE values and NNZ values of the test set by the width of the boxplots. Both the MAPE values and the NNZ values, and their standard deviations, can be used to measure the complexity of the model and the stability of the parameter estimates.
				\begin{table}[htbp] 
					\tabcolsep 0pt \vspace*{-12pt}
					\def\temptablewidth{1.0\textwidth}
					\centering
					\caption{MAPE and NNZ values for the training and test sets of rat genetic data} \label{tab8}
					\begin{tabular*}{\temptablewidth}{@{\extracolsep{\fill}} ccccccccc}
							\toprule
							Method	&Lasso &MCP &SCAD &Enet &Mnet &SACE &GCACE &SPPCSO\\
							\midrule
							MAPE(train)	&0.0385	&0.0534	&0.0502	& 0.0225& 0.0309& 0.0193& 0.0245&0.0282	\\
							MAPE(test)	&0.0908	&0.0936	&0.0884	&0.0855	&0.0944	&0.0841	&0.1063	&\textbf{0.0803}\\
							NNZ	&32.60	&25.40	&19.28	&805.64	& 254.94&82.90	& 135.38&72.44	\\
							\botrule
						\end{tabular*}%
					\end{table}
				
					\begin{figure}[htbp]
						\centering
						\begin{minipage}{0.45\textwidth}
							\centering
							\includegraphics[width=\textwidth]{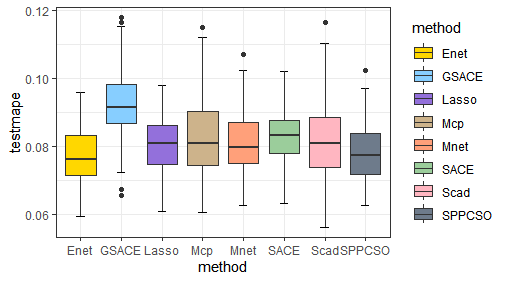}
							\caption{Box line plot of MAPE values }
							\label{fig:esti_error_example2}
						\end{minipage}\hfill
						\begin{minipage}{0.45\textwidth}
							\centering
							\includegraphics[width=\textwidth]{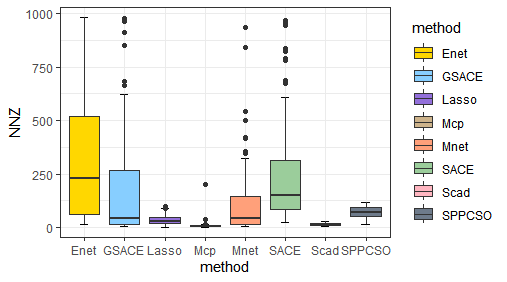}
							\caption{Box line plot of NNZ values }
							\label{fig:pre_error_example2}
						\end{minipage}
					\end{figure}
					
					From Table \ref{tab5} we find that SPPCSO has the smallest MAPE value in the test set, which indicates that in SPPCSO performs the best in terms of test error compared to other methods. In terms of the number of non-zero coefficients selected, SCAD selects the least number of variables, followed by MCP, Lasso, and SPPCSO. Although SCAD and MCP have better sparsity in selecting non-zero coefficients than SPPCSO, the prediction error is significantly larger than that of SPPCSO.  Lasso selects fewer variables than SPPCSO. However, Lasso, as a variable selection method, overcompresses the coefficients and tends to select only one of the highly correlated variables, which leads to the fact that despite its good sparsity, Lasso may estimate the coefficients of some of the variables that are important in their own right as zero, making the model lack of interpretability. 
					
					Therefore, from the table, SPPCSO chooses the sparser variables while ensuring the minimum testing error. From the box-and-line plot, we can see that SPPCSO has good stability of variable selection in repeated experiments. Comprehensively, SPPCSO has the best prediction performance and stability of variable selection in practical applications.

					\section{Conclusion}\label{sec6}
					
				In this paper, we introduced the Single-Parametric Principal Component Selection Operator (SPPCSO), a novel penalized estimation method that integrates the single-parametric principal component regression estimator with $L_{1}$ regularization. By adaptively adjusting shrinkage, SPPCSO effectively balances variable selection and information retention, improving model stability and reducing estimation variance.
				
				We established its theoretical properties, proving that SPPCSO achieves a smaller estimation error bound and satisfies variable selection consistency. To validate its performance, we conducted extensive numerical experiments under varying noise levels and correlation structures. The results demonstrated that SPPCSO consistently outperforms existing methods, maintaining robust and accurate estimation even in high-noise and highly correlated settings. Furthermore, real-data analysis in gene expression studies confirmed its ability to identify disease-related genes, highlighting its practical significance in high-dimensional applications.
				
				Overall, SPPCSO provides a powerful and flexible solution for high-dimensional variable selection, particularly in scenarios with strong correlations. Future research could explore extensions incorporating non-convex penalties or applications to structured sparsity problems, further enhancing its adaptability and effectiveness.
					
					\section{Acknowledgments}\label{sec7}
					This work is supported by the National Natural Science Foundation of
					China [Grant No. 12371281].The authors would like to express their sincere gratitude to the Associate Editor and the referees for their invaluable comments, which significantly contributed to enhancing the paper.

\end{document}